\newcommand{\eps}{\varepsilon}
\newcommand{\R}{{\mathbb{R}}}
\newcommand{\cH}{\mathcal{H}}
\newcommand{\cX}{\mathcal{X}}
\newcommand{\cY}{\mathcal{Y}}
\newcommand{\Hx}{{\cH_\cX}}
\newcommand{\Hy}{{\cH_\cY}}
\newcommand{\hC}{\widehat{C}^{(n)}}
\newcommand{\la}{\langle}
\newcommand{\ra}{\rangle}
\newcommand{\indep}{{\bot\negthickspace\negthickspace\bot}}
\newcommand{\eq}[1]{Eq.~(\ref{#1})}
\newtheorem{thm}{Theorem}
\title{Gradient-based kernel dimension reduction for supervised learning}
\author{
Kenji Fukumizu\footnote{The Institute of Statistical Mathematics, 
10-3 Midori-cho, Tachikawa, Tokyo 190-8562 Japan} and Chenlei Leng\footnote{Department of Statistics and Applied Probability,
National University of Singapore,
6 Science Drive 2, Singapore, 117546
}}
\begin{document}

\maketitle

\begin{abstract}
This paper proposes a novel kernel approach to linear dimension reduction for supervised learning.  The purpose of the dimension reduction is to find directions in the input space to explain the output as effectively as possible.  The proposed method uses an estimator for the gradient of regression function, based on the covariance operators on reproducing kernel Hilbert spaces.  In comparison with other existing methods, the proposed one has wide applicability without strong assumptions on the distributions or the type of variables, and uses computationally simple eigendecomposition.  Experimental results show that the proposed method successfully finds the effective directions with efficient computation.
\end{abstract}

\section{Introduction}

Dimension reduction is involved in most of modern data analysis, in which high dimensional data must be often handled.  The purpose of dimension reduction is multifold: preprocessing for another data analysis, aiming at less expensive computation in later processing, or construction of readable low dimensional expressions.  There are two categories of dimension reduction: unsupervised methods such as PCA, and supervised methods such as Fisher discriminant analysis (FDA).  This paper focuses on dimension reduction for supervised learning.

Let $(X,Y)$ be a random vector such that $X$ takes values in $\R^m$. The domain of $Y$ can be arbitrary, either continuous, discrete, or structured.  Supervised learning concerns how $Y$ is explained by $X$.  The purpose of dimension reduction in this setting is to find such features of $X$ that explain $Y$ as effectively as possible.
This paper focuses linear dimension reduction, in which linear combinations of the components of $X$ are used to make effective features.  Although there are many methods for extracting nonlinear features including kernel methods, this paper confines its attentions on linear features for the following reasons: (i) nonlinear feature extraction such as kernel method depends strongly on the choice of the nonlinearity (see Sec.~\ref{sec:real_data}, Wine data, for example). Linear methods are more stable.  (ii) we can apply some nonlinear transform $\phi(X)$ of $X$ so that linear combinations of $\phi(X)$ give effective features of $X$, once a linear dimension reduction method is established.

Beyond the classical approaches such as FDA and CCA, the modern approach to this linear dimension reduction is based on the formulation by conditional independence.  More precisely, we assume
\begin{equation}\label{eq:edr}
    p(Y|X) = \tilde{p}(Y|B^TX) \qquad \text{or equivalently} \qquad Y\indep X\,|\, B^T X
\end{equation}
for the distribution, where $B$ is a projection matrix ($B^TB = I_d$) onto a $d$-dimensional subspace ($d< m$) in $\R^m$, and wish to estimate $B$.  The subspace spanned by the column vectors of $B$ is called the {\em effective direction for regression}, or {\em EDR space} \cite{KerChauLi92}.  We consider methods of estimating $B$ without specific parametric models for $p(y|x)$, unlike the model-based approach such as \cite{Rish_etal_ICML2008}

The first method that aims at finding the EDR space is the {\em sliced inverse regression} (SIR, \cite{KerChauLi91}), which employs the fact that the inverse regression $E[X|Y]$ lies in the EDR space under some assumptions.  Many methods have been proposed in this vein of inverse regression (\cite{CookWeisberg91,Li_etal_2005}), which use some statistic in each slice of $Y$.  While many inverse regression methods are computationally simple, they often need some strong assumptions on the distribution of $X$ such as elliptic symmetry, and slice-based methods are not effective for classification, where the number of slices is at most that of classes.  Another interesting approach is the minimum average variance estimation (MAVE \cite{Xia_etal_2002MAVE}), in which the conditional variance of the regression in the direction of $B^TX$, $E[ (Y - E[Y|B^TX])^2 | B^T X]$, is minimized with the conditional variance estimated by the local linear kernel smoothing method.  The kernel smoothing method requires, however, careful choice of bandwidth parameter, and it is usually difficult to apply if the dimensionality is very high.

The most relevant to this paper is the methods that use the gradient of regressor $\varphi(x)=E[Y|X=x]$ \cite{Samarov93,Hristache01}.  As explained in Sec.~\ref{sec:grad}, under \eq{eq:edr} the gradient of $\varphi(x)$ is contained in the EDR space.  One can estimate the space by nonparametric estimation of the gradient.  There are some limitations in this method, however: the nonparametric estimation of the gradient in high-dimensional spaces is challenging, and the gradient is not estimable if some symmetry holds in the system.

A kernel method for dimension reduction has been proposed to overcome various limitations of existing methods. The kernel dimension reduction (KDR, \cite{Fukumizu04_jmlr,Fukumizu_etal09_KDR,Wang_etalNIPS2010}) uses the kernel method to characterize the conditional independence relation in \eq{eq:edr}.  While KDR is a general method applicable to a wide class of problems without requiring any strong assumptions on the distributions or types of $X$ or $Y$, the optimization needed for the estimation is computationally a problem: the objective function is non-convex, and the gradient descent method demands many inversions of Gram matrices, which prohibits applications to very high-dimensional or large data.

We propose a novel kernel method for dimension reduction using the gradient-based approach, but unlike the existing ones \cite{Samarov93,Hristache01}, the gradient is estimated by the covariance operators with positive definite kernels, which is based on the recent development in the kernel method \cite{Fukumizu_etal09_KDR,Song_etal_ICML2009}.  It solves the problems of existing methods: by virtue of the kernel method the response $Y$ can be of arbitrary type, and the kernel estimation of the gradient is stable without careful decrease of bandwidth.  It solves also the problem of KDR: the estimator by an eigenproblem needs no numerical optimization.  The method is thus applicable to large and high-dimensional data, as we demonstrate experimentally.

\section{Gradient-based kernel dimension reduction}

In this paper, the range of an operator $A$ is denoted by $\mathcal{R}(A)$.

\subsection{Gradient of a regression function and dimension reduction}
\label{sec:grad}

We first review the basic idea of the gradient-based method for dimension reduction in supervised learning, which has been used in \cite{Samarov93,Hristache01}.
Suppose $Y$ is a real-valued random variable such that the regression function $E[Y|X=x]$ is differentiable w.r.t.~$x$.  If the assumption \eq{eq:edr} holds, we have
\begin{align*}
    \frac{\partial}{\partial x} E[Y|X=x] &
    = \frac{\partial}{\partial x} \int yp(y|x)dy  = \int y\frac{\partial\tilde{p}(y|B^Tx)}{\partial x}dy = B \int y\left.\frac{\partial\tilde{p}(y|z)}{\partial z}\right|_{z=B^Tx}dy,
\end{align*}
which implies that the gradient $\frac{\partial}{\partial x} E[Y|X=x]$ at any $x$ is contained in the EDR space.  Based on this fact, the average derivative estimates (ADE, \cite{Samarov93}) has been proposed to use the average of the gradients for estimating $B$.   In the more recent method \cite{Hristache01}, assuming that $Y$ is one-dimensional continuous variable, a standard local linear least squares with a smoothing kernel (not necessarily positive definite kernel) \cite{FanGijbels1996} is used for estimating the gradient, and the dimensionality of the projection is iteratively reduced to the desired one. Since the gradient estimation for high-dimensional data is difficult in general, the iterative reduction is expected to give a more accurate estimation.  We call the method in \cite{Hristache01} iterative average derivative estimates (IADE).

\subsection{Kernel method for conditional expectation}

It has been recently revealed that the apparatus of positive definite kernels or reproducing kernel Hilbert space (RKHS) can be applied to estimate the regression function or conditional expectation with covariance operators on RKHS \cite{Fukumizu04_jmlr,Fukumizu_etal09_KDR,Song_etal_ICML2009}, which we briefly review below.
For a set $\Omega$, a ($\R$-valued) positive definite kernel $k$ on $\Omega$ is a symmetric kernel $k:\Omega\times\Omega\to\R$ such that $\sum_{i,j=1}^n c_i c_j k(x_i,x_j)\geq 0$ for any $x_1,\dots,x_n$ in $\Omega$ and $c_1,\ldots,c_n\in\R$.  It is known that a positive definite kernel on $\Omega$ uniquely defines a Hilbert space $\cH$ consisting of functions on $\Omega$ such that (i) $k(\cdot,x)$ is in $\cH$, (ii) the linear hull of $\{ k(\cdot,x)\mid x\in \Omega\}$ is dense in $\cH$, and (iii) for any $x\in \Omega$ and $f\in\cH$, $\la f,k(\cdot,x)\ra_\cH = f(x)$ (reproducing property), where $\la\cdot,\cdot\ra_\cH$ is the inner product of $\cH$.
The Hilbert space $\cH$ is called the {\it reproducing kernel Hilbert space} (RKHS) associated with $k$.

Let $(\cX,\mathcal{B}_\cX,\mu_{\cX})$ and
$(\cY,\mathcal{B}_\cY,\mu_{\cY})$ be measure spaces, and $(X,Y)$ be
a random variable on $\cX\times\cY$ with probability $P$. We assume that the probability density function (p.d.f.)~$p(x,y)$ and the
conditional p.d.f.~$p(y|x)$ always exist.  Also, we always assume that a positive definite kernel is measurable and bounded: the boundedness means $\sup_{x\in\Omega}k(x,x)< \infty$.

Let $k_\cX$ and $k_\cY$ be positive definite kernels on $\cX$ and $\cY$, respectively, with respective RKHS $\Hx$ and $\Hy$.
The (uncentered) {\em covariance operator} $C_{YX}:
\Hx\to\Hy$ is defined by the equation
\begin{equation}\label{eq:cov_op}
    \la g, C_{YX} f\ra_\Hy = E[f(X)g(Y)] = E\bigl[ \la f, \Phi_\cX(X)\ra_\Hx \la\Phi_\cY(Y),g\ra_\Hy \bigr]
\end{equation}
for all $f\in\Hx,g\in\Hy$, where $\Phi_\cX(x)=k_\cX(\cdot,x)$ and $\Phi_\cY(y)=k_\cY(\cdot,y)$.  Similarly,
$C_{XX}$ denotes the operator on $\Hx$ that satisfies $\la f_2, C_{XX}f_1\ra = E[f_2(X)f_1(X)]$ for any $f_1,f_2\in\Hx$.  These definitions are straightforward extensions of the ordinary covariance matrices, if we consider the covariance of the random vectors $\Phi_\cX(X)$ and $\Phi_\cY(Y)$ on RKHS.

By setting $g=k_\cY(\cdot,y)$ in \eq{eq:cov_op}, the reproducing property derives
\[
(C_{YX}f )(y) = \int k_\cY(y,\tilde{y})f(\tilde{x})dP(\tilde{x},\tilde{y}),\quad
(C_{XX}f )(x) = \int k_\cX(x,\tilde{x})f(\tilde{x})dP_X(\tilde{x}),
\]
which shows the explicit expressions of $C_{YX}$ and $C_{XX}$ as integral operators.

\if 0
An important notion in statistical inference with positive definite kernels is the characteristic property.  A bounded measurable positive definite kernel $k$ on a measurable space $(\Omega, \mathcal{B})$ is called {\em characteristic} if the mapping from a probability $Q$ on $(\Omega, \mathcal{B})$ to the kernel mean $m_Q\in \cH$ is injective \citep{Fukumizu_etal09_KDR,Sriperumbudur_etal2010JMLR}.  This is equivalent to assume $E_{X\sim P}[k(\cdot,X)] = E_{X'\sim Q}[k(\cdot,X')]$ implies $P=Q$:  probabilities are uniquely determined by their kernel means on the associated RKHS.  With this property, problems of statistical inference can be casted to inference on the kernel means.
A popular example of characteristic kernel on a subset of an Euclidean space is the Gaussian RBF kernel $k(x,y) = \exp(-\|x-y\|^2/(2\sigma^2))$.  It is known that a positive definite kernel on a measurable space $(\Omega,\mathcal{B})$ with corresponding RKHS $\cH$ is characteristic if and only if $\cH + \R$ is dense in $L^2(P)$ for arbitrary probability $P$ on $(\Omega,\mathcal{B})$, where $\cH+\R$ is the direct sum of two RKHSs $\cH$ and $\R$.  This implies that the RKHS defined by a characteristic kernel is rich enough to be dense in $L^2$ space up to the constant functions.

\fi


An advantage of the kernel method is that estimation with finite data is straightforward.  Given i.i.d.~sample $(X_1,Y_1),\ldots,(X_n,Y_n)$ with law $P$, the covariance operator is estimated by
\begin{equation}\label{eq:emp_covop}
    \hC_{YX}f = \frac{1}{n} \sum_{i=1}^n k_\cY(\cdot,Y_i)\la  k_\cX(\cdot,X_i), f\ra_\Hx = \frac{1}{n} \sum_{i=1}^n f(X_i) k_\cY(\cdot,Y_i).
\end{equation}
The estimator $\hC_{XX}$ is given similarly.
It is known that these estimators are $\sqrt{n}$-consistent in Hilbert-Schmidt norm \cite{Gretton_etal05AISTATS}.

The fundamental result in discussing conditional probabilities with kernels is the following fact.
\begin{thm}[\cite{Fukumizu04_jmlr}]\label{thm:cond_mean}
If $E[g(Y)|X=\cdot]\in\Hx$ holds for $g\in\Hy$, then
\[
    C_{XX} E[g(Y)|X=\cdot]=C_{XY} g.
\]
\end{thm}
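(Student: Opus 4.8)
The plan is to prove the operator identity by testing both sides against an arbitrary element of $\Hx$ and reducing everything to the defining equations of the covariance operators together with the tower property of conditional expectation. Write $h := E[g(Y)|X=\cdot]$; the hypothesis is precisely that $h\in\Hx$, which is what makes $C_{XX}h$ well defined. Since both $C_{XX}h$ and $C_{XY}g$ are elements of $\Hx$, it suffices to show $\la f, C_{XX}h\ra_\Hx = \la f, C_{XY}g\ra_\Hx$ for every $f\in\Hx$; equality of all inner products against test functions then forces the two elements to coincide.

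First I would unfold the left-hand side with the defining property of $C_{XX}$: taking the two arguments to be $f$ and $h$ gives $\la f, C_{XX}h\ra_\Hx = E[f(X)h(X)] = E[f(X)\,E[g(Y)|X]]$. For the right-hand side, I would use that $C_{XY}$ is the adjoint $C_{YX}^*$, as is standard for covariance operators; then \eq{eq:cov_op} and the symmetry of the (real) inner product yield $\la f, C_{XY}g\ra_\Hx = \la C_{YX}f, g\ra_\Hy = E[f(X)g(Y)]$.

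The decisive step is the tower property: because $f(X)$ is $\sigma(X)$-measurable it can be pulled inside the inner conditional expectation, so $E[f(X)\,E[g(Y)|X]] = E\bigl[E[f(X)g(Y)|X]\bigr] = E[f(X)g(Y)]$. This identifies the two inner products for every $f$, which is exactly what the reduction in the first paragraph needs, and the proof is complete.

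The only point requiring care is integrability, since the tower property and the manipulations above presuppose that $f(X)g(Y)$ and $f(X)h(X)$ are integrable. This is supplied by the standing boundedness assumption on the kernels: for $f\in\Hx$ the reproducing property gives $|f(x)| = |\la f, \kxd{x}\ra_\Hx| \le \|f\|_\Hx\sqrt{k_\cX(x,x)}$, which is uniformly bounded, and similarly for $g$ and $h$, so all the relevant products are bounded and the exchanges of expectation are justified. I expect this routine integrability bookkeeping, rather than any conceptual hurdle, to be the main obstacle.
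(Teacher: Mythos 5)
Your proof is correct and is essentially the canonical argument for this result: the paper itself offers no proof (the theorem is quoted from \cite{Fukumizu04_jmlr}), and the proof there proceeds exactly as yours does, by testing both sides against an arbitrary $f\in\Hx$, unfolding $\la f, C_{XX}h\ra_\Hx = E[f(X)h(X)]$ and $\la f, C_{XY}g\ra_\Hx = E[f(X)g(Y)]$ via the defining relation \eq{eq:cov_op} and $C_{XY}=C_{YX}^*$, and then invoking the tower property. Your closing remark that the boundedness of the kernels makes $f$, $g$, and $h$ uniformly bounded is the correct and sufficient justification for the integrability needed in that step.
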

If $C_{XX}$ is injective\footnote{Noting $\la C_{XX}f, f\ra=E[f(X)^2]$, it is easy to see that $C_{XX}$ is injective, if $k_\cX$ is a continuous kernel on a topological space $\cX$, and $P_X$ is a
Borel probability measure such that $P(U)>0$ for any open set $U$ in $\cX$.}, the above relation can be expressed as
\begin{equation}\label{eq:cond_basic}
    E[g(Y)|X=\cdot]={C_{XX}}^{-1}C_{XY} g.
\end{equation}

The assumption $E[g(Y)|X=\cdot]\in \Hx$ may not hold in general; we can easily make counterexamples with Gaussian kernel and Gaussian distributions.  We can nonetheless obtain an empirical estimator
based on \eq{eq:cond_basic}, namely,
\[
    (\hC_{XX}+\eps_n I)^{-1}\hC_{XY}g,
\]
where $\eps_n$ is a regularization coefficient in Thikonov-type regularization.  As we discuss in Appendix, we can in fact prove rigorously that
this estimator converges to $E[g(Y)|X=\cdot]$.


To apply the above kernel expressions to the method discussed in Sec.~\ref{sec:grad}, we need a way of taking the derivative of a function.
It is known ({\em e.g.}, \cite{SteinwartChristmann2008} Sec.~4.3) that if a positive definite kernel $k(x,y)$ on an open set in Euclidean space is continuously differentiable with respect to $x$ and $y$, every $f$ in the corresponding RKHS is continuously differentiable. If further $\frac{\partial }{\partial x}k(\cdot,x)\in\Hx$, we have
\begin{equation}\label{eq:kernel_deriv}
\frac{\partial f}{\partial x} = \left\la f, \frac{\partial }{\partial x}k(\cdot,x)\right\ra_\Hx.
\end{equation}
Namely, the derivative of any function in that RKHS can be computed in the form of the inner product.
This property combined with the above kernel estimator of $E[g(Y)|X=x]$ provides a method for dimension reduction.

\subsection{Gradient-based kernel method for dimension reduction}

\subsubsection{Algorithm}
\label{sec:alg}

Assume that $\cX=\R^m$, $C_{XX}$ is injective, $k_\cX(x,\tilde{x})$ is continuously differentiable, $E[g(Y)|X=x]\in \Hx$ for any $g\in\Hy$, and $\frac{\partial }{\partial x}k_\cX(\cdot,x)\in\mathcal{R}(C_{XX})$.
It follows from Eqs.~(\ref{eq:cond_basic}) and (\ref{eq:kernel_deriv}) that
\begin{equation}
    \frac{\partial}{\partial x}E[g(Y)|X=x]
      = \Bigl\la C_{XX}^{-1} C_{XY}g , \frac{\partial k_\cX(\cdot,x)}{\partial x}
    \Bigr\ra
     = \Bigl\la g,  C_{YX}C_{XX}^{-1}\frac{\partial k_\cX(\cdot,x)}{\partial x}\Bigr\ra.
    \label{eq:deriv}
\end{equation}
Define $\Psi:\R^m\to\Hy$, $x\mapsto E[k_\cY(\cdot,Y)|X=x]$.  By plugging $g=k(\cdot,y)$ into \eq{eq:deriv}, we see
\[
    \frac{\partial \Psi(x)}{\partial x} = C_{YX}C_{XX}^{-1} \frac{\partial k_\cX(\cdot,x)}{\partial x}.
\]
On the other hand, from $\Psi(x) = \int k_\cY(\cdot,y)p(y|x)d\mu_y(y)$, the same argument as in Sec.~\ref{sec:grad} shows that $\frac{\partial \Psi(x)}{\partial x}=\Xi(x)B$ with an operator $\Xi(x)$ from $\R^m$ to $\Hy$, where we use a slight abuse of notation by identifying the  operator $\Xi(x)$ with a matrix.  Taking the inner product in $\Hy$, we have
\[
B^T \la \Xi(x), \Xi(x)\ra_\Hy B = \Bigl\la \frac{\partial k_\cX(\cdot,x)}{\partial x}, C_{XX}^{-1}C_{XY} C_{YX}C_{XX}^{-1} \frac{\partial k_\cX(\cdot,x)}{\partial x}\Bigr\ra =:M(x),
\]
which shows that the eigenvectors for non-zero eigenvalues of the $m\times m$ symmetric matrix $M(x)$ are contained in the EDR space.  This fact is the basis of the proposed method.
Note that, in comparison with the conventional gradient-based method described in Sec.~\ref{sec:grad},  this method is interpreted as considering simultaneously various regression functions $E[k_\cY(\tilde{y},Y)|X=x]$ given by all $\tilde{y}\in\cY$.

Given i.i.d.~sample $(X_1,Y_1),\ldots,(X_n,Y_n)$ from the true distribution,
based on the empirical covariance operators \eq{eq:emp_covop} and regularized inversions, the matrix $M(x)$ is estimated by
\begin{align}\label{eq:Mn}
    \widehat{M}_n(x) & = \bigl\la \tfrac{\partial k_\cX(\cdot,x)}{\partial x}, \bigl( \hC_{XX}+\eps_n I\bigr)^{-1}\hC_{XY} \hC_{YX}\bigl( \hC_{XX}+\eps_n I\bigr)^{-1} \tfrac{\partial k_\cX(\cdot,x)}{\partial x}\bigr\ra  \nonumber \\
    & = \nabla{\bf k}_X(x)^T(G_X + n\eps_n I)^{-1}G_Y (G_X + n\eps_n I)^{-1}\nabla{\bf k}_X(x),
\end{align}
where $G_X$ and $G_Y$ are Gram matrices $(k_\cX(X_i,X_j))$ and $(k_\cY(Y_i,Y_j))$, respectively, and $\nabla{\bf k}_X(x) =  (
    \frac{\partial k_\cX(X_1,x)}{\partial x} ,
    \cdots ,
    \frac{\partial k_\cX(X_n,x)}{\partial x})^T \in\R^n$.

As the eigenvectors of $M(x)$ are contained in the EDR space for any $x$, we propose to use the average of $M(X_i)$ over all the data points $X_i$, and define
\[
    \tilde{M}_n=\tfrac{1}{n}{\textstyle \sum_{i=1}^n} \widehat{M}_n(X_i) = \tfrac{1}{n}{\textstyle \sum_{i=1}^n} \nabla{\bf k}_X(X_i)^T(G_X + n\eps_n I_n)^{-1}G_Y (G_X + n\eps_n I_n)^{-1}\nabla{\bf k}_X(X_i).
\]
In the case of Gaussian kernel, for example, $\nabla{\bf k}_X(X_i)$ is given by
$(X_i - X_j)\exp( -\frac{1}{2\sigma^2} \|X_i-X_j\|^2 )$,
which is the Hadamard product between the Gram matrix $G_X$ and $(X_i-X_j)_{ij=1}^n$.

The projection matrix $B$ in \eq{eq:edr} is then estimated by the top $d$ eigenvectors of the $m\times m$ symmetric matrix $\tilde{M}_n$.  We call this method {\em gradient-based kernel dimension reduction} (gKDR).

\subsubsection{Discussions and extensions}
\label{sec:gKDR_discussion}

The proposed gKDR applies to a wide class of problems.  In contrast to many existing methods, the gKDR can handle any type of data for $Y$ including multivariate or structured variables, and make no strong assumptions on the distribution of $X$.  The gKDR method can be applied to classification and continuous output exactly in the same manner.

The previous gradient-based methods ADE and IADE have an obvious weakness.  Suppose $Y$ is one-dimensional and $Y=\varphi(B^TX)+Z$, where $Z$ is a zero-mean noise.  If $E[\varphi'(B^TX)]=0$, the subspace spanned by $B$ cannot be estimated.  This condition holds if $\varphi$ and the distribution of $X$ satisfy some symmetry.  These methods in general find only a subspace of the EDR space. In contrast, the gKDR approach incorporates various functions $k_\cY(\tilde{y},\cdot)$ for $\varphi$, as discussed in Sec.~\ref{sec:alg}, and thus this weakness may be avoided.

As in all kernel methods, the results of gKDR depend on the choice of kernels, though the linear features are less sensitive to the choice than nonlinear features.  We use the  cross-validation (CV) for choosing kernels and parameters, combined with some regression or classification method.  In this paper, the k-nearest neighbor (kNN) regression / classification is used in CV for its simplicity: for each candidate of a kernel or parameter, we compute the CV error by the kNN method with the input data projected on the subspace given by gKDR, and choose the one that gives the least error.

The time complexity of the matrix inversions and the eigendecomposition required for gKDR are $O(n^3)$, which is prohibitive for large data sets.  We can apply, however, low-rank approximation of Gram matrices, such as incomplete Cholesky decomposition \cite{FineScheinberg2001}, which is a standard method for reducing time complexity in kernel methods.  The space complexity may be also a problem of gKDR, since $(\nabla {\bf k}_X(X_i))_{i=1}^n$ has $n^2\times m$ dimension.  In the case of Gaussian kernel, we have a way of reducing the necessary memory by low rank approximation of the Gram matrices.  Note that $\frac{\partial }{\partial x^a}k_X(X_j,x)|_{x=X_i}$ for Gaussian kernel is given by $\frac{1}{\sigma^2}(X_j^a-X_i^a)\exp(-\|X_j-X_i\|^2/(2\sigma^2))$.  Let $G_X\approx R R^T$ and $G_Y\approx H H^T$ be the low rank approximation with $r_x={\rm rk}R, r_y={\rm  rk}H$ ($r_x,r_y < n,m$).  With the notation $F:=(G_X+n\eps_n I_n)^{-1}H$ and $\Theta_i^{as}=\frac{1}{\sigma^2}X_i^a R_{is}$, we have
\[
    \tilde{M}_{n,ab} =  \sum_{i=1}^n\sum_{t=1}^{r_y} \Gamma_{ia}^t \Gamma_{ib}^t
    \qquad (1\leq a,b \leq m),
\]
\[
    \Gamma_{ia}^t = \sum_{j=1}^n\sum_{s=1}^{r_x} \frac{1}{\sigma^2} (X_j^a-X_i^a)R_{js}R_{is}F_{jt} = \sum_{s=1}^{r_x} R_{is}\Bigl(\sum_{j=1}^n \Theta_{j}^{as}F_{jt}\Bigr) - \sum_{s=1}^{r_x} \Theta_{i}^{as}\Bigl( \sum_{j=1}^n R_{js}F_{jt}\Bigr).
\]
With this method, the complexity is $O(nmr)$ in space and $O(nm^2r)$ in time ($r=\max\{r_x,r_y\})$, which is much more efficient in memory than straightforward implementation.

We introduce two variants of gKDR.
First, as discussed in \cite{Hristache01}, accurate nonparametric estimation for the derivative of regression function with high-dimensional $X$ is not easy in general.  We propose a method for decreasing the dimensionality iteratively in a similar manner to IADE.  Using gKDR, we first find a projection matrix $B_1$ of a larger dimension $d_1$ than the target dimensionality $d$, project data $X_i$ onto the subspace as $Z_i^{(1)} = B_1^T X_i$, and find the projection matrix $B_2$ ($d_1\times d_2$ matrix) for $Z_i^{(1)}$ onto a $d_2$ ($d_2 < d_1$) dimensional subspace.  After repeating this process to the dimensionality $d$, the final result is given by $\hat{B}=B_\ell \cdots B_2 B_1$.  In this way, we can expect the later projector is more accurate by the low dimensionality of the data $Z_i^{(s)}$.  We call this method gKDR-i.

Second, in classification problems, where the $L$ classes are encoded as $L$ different points, the Gram matrix $G_Y$ is of rank $L$ at most.  We can have at most $L$ dimensional subspace by the gKDR method (see \eq{eq:Mn}), which is a strong limitation of gKDR, especially for binary classification.  Note that this problem is shared by many linear dimension reduction methods including CCA and slice-based methods.  To solve this problem, we propose to use the variation of $\widehat{M}_n(x)$ over all points $x=X_i$ instead of the average $\tilde{M}_n$.  We compute the projection matrix $\widehat{B}_i$ from $\widehat{M}_n(X_i)$ at each $i$, take the average of projectors $\widehat{P} = \frac{1}{n} \sum_{i=1}^n \widehat{B}_i\widehat{B}_i^T$, and give the estimator $B$ by the top eigenvectors of $\widehat{P}$.  In practice, eigendecomposition of $\widehat{M}(X_i)$ for all $i$ may not be feasible.  In that case, by partitioning $\{1,\ldots,n\}$ into $T_1,\ldots,T_\ell$, the projection matrices $\widehat{B}_{[a]}$ given by the eigenvectors of $\widehat{M}_{[a]} = \sum_{i\in T_a}\widehat{M}(X_i)$ can be used to define $\widehat{P}=\frac{1}{\ell}\sum_{a=1}^\ell \widehat{B}_{[a]}\widehat{B}_{[a]}^T$.  We call this method gKDR-v.

\if 0
\begin{figure}[tb]
\hrule
\begin{enumerate}
\item Compute Gram matrices: $G_X = (k_\cX(X_i,X_j))$ and $G_Y = (k_\cY(Y_i,Y_j))$,
\item Compute \[
    \tilde{M} = \sum_{i=1}^n\nabla{\bf k}_X(X_i)^T(G_X + n\eps_n I)^{-1}G_Y (G_X + n\eps_n I)^{-1}\nabla{\bf k}_X(X_i),
\]
where $
    \nabla{\bf k}_X(x) =  \bigl(
    \frac{\partial k_\cX(X_1,x)}{\partial x},     \cdots,
    \frac{\partial k_\cX(X_n,x)}{\partial x}\bigr)^T$.
\item Compute the $d$ unit eigenvectors $u_1,\ldots,u_d$ corresponding to the largest $d$ eigenvalues.
\item Output $B = (u_1,\ldots,u_d)$.
\end{enumerate}
\hrule
\vspace*{-2mm}
 \caption{Algorithm of gKDR}
  \label{fig:alg}
\end{figure}
\fi

\subsubsection{Theoretical analysis of gKDR}

Under some conditions, we can obtain the consistency and its rate for $\widehat{M}_n(x)$ and $\tilde{M}_n$.  We assume all the RKHS are separable, and $\|M\|_F$ denotes Frobenius norm of a matrix $M$.
\begin{thm}\label{thm:convergence}
Assume that $\frac{\partial k_\cX(\cdot,x)}{\partial x^a}\in \mathcal{R}(C_{XX}^{\beta+1})$ $(a=1,\ldots,m)$ for some $\beta\geq 0$ and $E[k_\cY(y,Y)|X=\cdot]\in \Hx$ for every $y\in\cY$. Then, for  $\eps_n = n^{-\max\{\frac{1}{3}, \frac{1}{2\beta+2} \}}$, we have
\[
\widehat{M}_n(x) - M(x) =O_p\Bigl( n^{-\min\{\frac{1}{3}, \frac{2\beta+1}{4\beta+4} \}}\Bigr)
\]
for every $x\in\cX$ as $n\to \infty$.
If further $E[\|M(X)\|^2_F]<\infty$ and $\frac{\partial k_\cX(\cdot,x)}{\partial x^a}=C_{XX}^{\beta+1}h_x^a$ with $E\|h_X^a\|_\Hx<\infty$, then $\tilde{M}_n \to E[M(X)]$ in the same order as above.
\end{thm}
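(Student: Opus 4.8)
The plan is to reduce the matrix statement about $\tilde M_n$ to the single operator-norm estimate that already drives the pointwise half, exploiting a decoupling between the sampling randomness and the averaging variable. Write $\partial_a k_x := \frac{\partial k_\cX(\cdot,x)}{\partial x^a}$, and set $v_a(x):=C_{YX}C_{XX}^{-1}\partial_a k_x\in\Hy$ (the $a$-th partial derivative of $\Psi(x)=E[k_\cY(\cdot,Y)|X=x]$) and $\hat v_a(x):=\hC_{YX}(\hC_{XX}+\eps_n I)^{-1}\partial_a k_x$, so that $M_{ab}(x)=\la v_a(x),v_b(x)\ra_\Hy$ and $\widehat{M}_{n,ab}(x)=\la\hat v_a(x),\hat v_b(x)\ra_\Hy$. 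The key observation is that the source representation $\partial_a k_x=C_{XX}^{\beta+1}h_x^a$ turns the error into an $x$-independent operator applied to $h_x^a$: substituting it everywhere gives $v_a(x)=\Upsilon h_x^a$ with $\Upsilon:=C_{YX}C_{XX}^{\beta}$ bounded, and
\[
\hat v_a(x)-v_a(x)=\Lambda_n h_x^a,\qquad \Lambda_n:=\hC_{YX}(\hC_{XX}+\eps_n I)^{-1}C_{XX}^{\beta+1}-C_{YX}C_{XX}^{\beta},
\]
where $\Lambda_n$ does not depend on $x$. The pointwise half of the theorem is then exactly the statement that $\|\Lambda_n\|_{op}=O_p(r_n)$ with $r_n:=n^{-\min\{1/3,(2\beta+1)/(4\beta+4)\}}$; I would prove this by splitting $\Lambda_n$ through the intermediate term $C_{YX}(C_{XX}+\eps_n I)^{-1}C_{XX}^{\beta+1}$ into a regularization bias $-\eps_n C_{YX}(C_{XX}+\eps_n I)^{-1}C_{XX}^{\beta}$, bounded by $O(\eps_n^{\min\{\beta+1/2,1\}})$ via the factorization $C_{YX}=C_{YY}^{1/2}VC_{XX}^{1/2}$, and a sampling part bounded by $O_p(n^{-1/2}\eps_n^{-1/2})$ using the $\sqrt n$-consistency of $\hC_{XX},\hC_{YX}$ in Hilbert--Schmidt norm together with the resolvent estimate $\|C_{XX}^{1/2}(\hC_{XX}+\eps_n I)^{-1}\|=O_p(\eps_n^{-1/2})$; the prescribed $\eps_n$ balances the two.

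Granting $\|\Lambda_n\|_{op}=O_p(r_n)$, I would prove the averaged claim through the split
\[
\tilde M_n-E[M(X)]=\Bigl(\tilde M_n-\tfrac1n{\textstyle\sum_i}M(X_i)\Bigr)+\Bigl(\tfrac1n{\textstyle\sum_i}M(X_i)-E[M(X)]\Bigr)=:(A)+(B).
\]
Term $(B)$ is the empirical average of the i.i.d.\ matrices $M(X_i)$ about their mean; since $E\|M(X)\|_F^2<\infty$, the central limit theorem (or Chebyshev entrywise, $m$ fixed) gives $(B)=O_p(n^{-1/2})$, which is $o(r_n)$ because $\min\{1/3,(2\beta+1)/(4\beta+4)\}\le 1/3<1/2$. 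For $(A)$ the decoupling does the work: expanding the bilinear form with $\hat v_a-v_a=\Lambda_n h_x^a$ and $v_a=\Upsilon h_x^a$ gives, at each $X_i$,
\[
\widehat{M}_{n,ab}(X_i)-M_{ab}(X_i)=\la\Upsilon h_{X_i}^a,\Lambda_n h_{X_i}^b\ra+\la\Lambda_n h_{X_i}^a,\Upsilon h_{X_i}^b\ra+\la\Lambda_n h_{X_i}^a,\Lambda_n h_{X_i}^b\ra,
\]
whence $|(A)_{ab}|\le\bigl(2\|\Upsilon\|\,\|\Lambda_n\|_{op}+\|\Lambda_n\|_{op}^2\bigr)\,\tfrac1n\sum_i\|h_{X_i}^a\|_\Hx\|h_{X_i}^b\|_\Hx$. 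The empirical average $\tfrac1n\sum_i\|h_{X_i}^a\|\,\|h_{X_i}^b\|$ converges by the law of large numbers and is $O_p(1)$ under the integrability hypothesis on the representers, so combining with $\|\Lambda_n\|_{op}=O_p(r_n)$ yields $\|(A)\|_F=O_p(r_n)$ and hence $\tilde M_n-E[M(X)]=O_p(r_n)$, as claimed.

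The main obstacle is precisely that the averaging points $X_1,\dots,X_n$ are the same data used to build $\widehat{M}_n$, so $\widehat{M}_n(X_i)-M(X_i)$ and $X_i$ are dependent and one cannot simply insert the pointwise rate into the average; a uniform-in-$x$ control is what one would otherwise be forced to establish, and that is delicate. The device that removes the difficulty is the factorization $\hat v_a(x)-v_a(x)=\Lambda_n h_x^a$ with $\Lambda_n$ free of $x$: it confines all the sample randomness to the single operator $\Lambda_n$ and all the $x$-dependence to the deterministic representers $h_x^a$, so the dependent average collapses to a product of $\|\Lambda_n\|_{op}$ and an ordinary law-of-large-numbers average of source norms. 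The remaining technical points are the resolvent bound $\|C_{XX}^{1/2}(\hC_{XX}+\eps_n I)^{-1}\|=O_p(\eps_n^{-1/2})$ (which needs $\eps_n\gtrsim n^{-1/2}$, satisfied by the prescribed $\eps_n$) and the integrability of $\|h_X^a\|_\Hx$, which, together with Cauchy--Schwarz for the cross terms, keeps the averaged source-norm products bounded.
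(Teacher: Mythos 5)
Your proposal follows essentially the same route as the paper's proof: the same bias/variance split through the intermediate term $C_{YX}(C_{XX}+\eps_n I)^{-1}C_{XX}^{\beta+1}$, with Baker's factorization $C_{YX}=C_{YY}^{1/2}W_{YX}C_{XX}^{1/2}$ and the spectral bound $\eps_n\lambda^{\beta+1/2}/(\lambda+\eps_n)\leq \eps_n^{\min\{1,(2\beta+1)/2\}}$ for the bias, the resolvent identity plus $\sqrt{n}$-consistency of the empirical covariances for the $O_p(\eps_n^{-1/2}n^{-1/2})$ sampling part, and the same two-term decomposition of $\tilde{M}_n - E[M(X)]$ with the CLT absorbing $\frac{1}{n}\sum_i M(X_i)-E[M(X)]$. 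Your $x$-free operator $\Lambda_n$ makes explicit exactly what the paper's terse closing remark (``replacing $h$ by $\frac{1}{n}\sum_i h_{X_i}^a$ in the proof of Theorem \ref{thm:rates}'') relies on to average over the same data points, upgrading the paper's vector-wise bound to an operator-norm bound at no extra cost; the only caveat is that your Cauchy--Schwarz bound on the averaged quadratic form uses $E[\|h_X^a\|_\Hx\|h_X^b\|_\Hx]<\infty$, slightly more than the first moments stated in the theorem --- a gap the paper's own one-line corollary step shares.
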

The proof is given in Appendix.  Note that, assuming that the eigenvalues of $M(x)$ or $E[M(X)]$ are all distinct, the convergence of matrices implies the convergence of the eigenvectors, thus the estimator of gKDR is consistent to the subspace given by the top eigenvectors of $E[M(X)]$.

\section{Experimental results}

We always use the Gaussian kernel $k(x,\tilde{x})=\exp( -\frac{1}{2\sigma^2}\|x-\tilde{x}\|^2)$  in the kernel method below.

\subsection{Synthesized data}
First we use two types of synthesized data, which have been used in \cite{Hristache01}, to verify the basic performance of gKDR and the two variants.  The data are generated by
\[
(A): \qquad Y = Z\sin(\sqrt{5}Z) + W, \quad Z=\tfrac{1}{\sqrt{5}}(1,2,0,\ldots,0)^T X,
\qquad
\]
\begin{gather}
\hspace*{-4cm}(B): \qquad Y = (Z_1^3 + Z_2)(Z_1-Z_2^3) + W, \nonumber \\ 
Z_1 = \tfrac{1}{\sqrt{2}}(1,1,0,\ldots,0)^T X, \quad Z_2 = \tfrac{1}{\sqrt{2}}(1,-1,0,\ldots,0)^T X, \nonumber
\end{gather}
where $10$-dimensional $X$ is generated by the uniform distribution on $[-1,1]^{10}$ and $W$ is independent Gaussian noise with zero mean and variance $10^{-2}$.  The sample size is $n=100$ and $200$.  The discrepancy between the estimator $B$ and the true projector $B_0$ is measured by $\|B_0 B_0^T (I_m - BB^T)\|_F / d$, where $\|\cdot\|_F$ is the Frobenius norm.  For choosing the parameter $\sigma$ in Gaussian kernel, CV with kNN (${\rm k}=5$) is used with 8 points given by $c\sigma_{med}$ ($0.5\leq c\leq 10$), where $\sigma_{med}$ is the median of pairwise distances of data \cite{Gretton_etal_nips07} (the same strategy is used for CV in all the experiments below).  The regularization parameter is fixed as $\eps_n = 10^{-7}$.

\begin{table}[t]
  \centering
  {\footnotesize
    \begin{tabular}{c|c|c|c|c|c}
       \hline
        & gKDR &  gKDR-i  &  gKDR-v & gKDR+KDR & IADE \cite{Hristache01}\\
       \hline
       (A) $n=100$ & 0.2114 ($0.0636$) & $0.1905$   ($0.0495$) &  $0.2101$  ($0.0704$) & $0.0883$ ($0.1473$) & 0.0903\\
       (A) $n=200$  & 0.1393 ($0.0362$) & $0.1217$  ($0.0352$)  & $0.1356$  ($0.0351$) &  $0.0501$ ($0.0964$) & 0.0537\\
       \hline
       (B) $n=100$ & $0.1500$ ($0.0363$) & $0.1358$ ($0.0347$) &  $0.1630$   ($0.0398$) & $0.1076 $ ($ 0.0967$)  & 0.182\\
       (B) $n=200$  & $0.0755$ ($0.0157$) & $0.0750$ ($0.0153$) & $0.0802$   ($0.0160$)
&  $0.0506$ ($0.0729$)  & 0.0472\\
       \hline
     \end{tabular}
     }
  \caption{Synthesized data.  Mean and standard error (in brackets) over 100 samples.  The mean errors of IADE are taken from \cite{Hristache01}.  }\label{tbl:synthesized}
\end{table}

We compare the results only with IADE, since \cite{Hristache01} reports that the results of IADE are much better than those of SIR and pHd.  From Table \ref{tbl:synthesized}, we see that gKDR, gKDR-i (5 iterations), and gKDR-v show comparable results for data (B), while IADE works better for data (A).  For data (B), when the sample size is 100, the proposed gKDR methods show much better results than IADE.  gKDR and gKDR-v show similar errors, and gKDR-i improves them in all the four cases.
We also use the results of gKDR as the initial state for KDR, which requires non-convex optimization with gradient method.  As we can see from the table, KDR improves the
accuracy significantly, showing results better than or comparable to IADE.  The optimization in KDR, however, sometimes fails to find a good solution, which causes the large variance in the experiments.

\subsection{Real world data}
\label{sec:real_data}

\begin{table}[t]
  \centering
\vspace*{-4mm}
  \begin{tabular}{c|c|c|c}
    \hline
     & Dim. & Train & Test \\
     \hline
    heart-disease & 13 & 149 & 148  \\
    ionoshpere & 34 & 151 & 200 \\
    breast-cancer & 30 & 200 & 369  \\
    \hline
  \end{tabular}
  \caption{Summary of data sets: dimensionality of $X$ and the number of  data}\label{tbl:data_descr}
\end{table}

We first use {\em Wine data}, which is available at the UCI machine learning repository \cite{UCI_repository}, to demonstrate low dimensional visualization.  In this data set, $X$ is a 13 dimensional continuous variable, and $Y$ is the class label representing three classes of wine, which is encoded as $\{(1,0,0),(0,1,0),(0,0,1)\}$.  The sample size is 173.  Two dimensional projections are estimated by gKDR and KDR. For gKDR, the parameter $\sigma$ in Gaussian kernel is chosen by CV with kNN (${\rm k}=5$).  As in Figure \ref{fig:wine}, the results by the KDR and gKDR look similar, while each of the classes by KDR is more condensed.  With Intel (R) Core (TM) i7 960, 3.20GHz, the computational time required for one parameter set was 0.14 sec by gKDR and 4.80 sec by KDR with 50 iterations of line search: gKDR is 30 times faster than KDR for this data set.
As comparison, we show also the results by kernel CCA (KCCA) \cite{Akaho_2001_kcca,KernelICA}.
Since the nonlinear mapping in KCCA easily separates the three classes with small $\sigma$, cross-validation is unstable and inapplicable.  The results given by the three values of $\sigma$ are very different for KCCA.

\begin{figure}
  \centering
  \includegraphics[height=3.5cm]{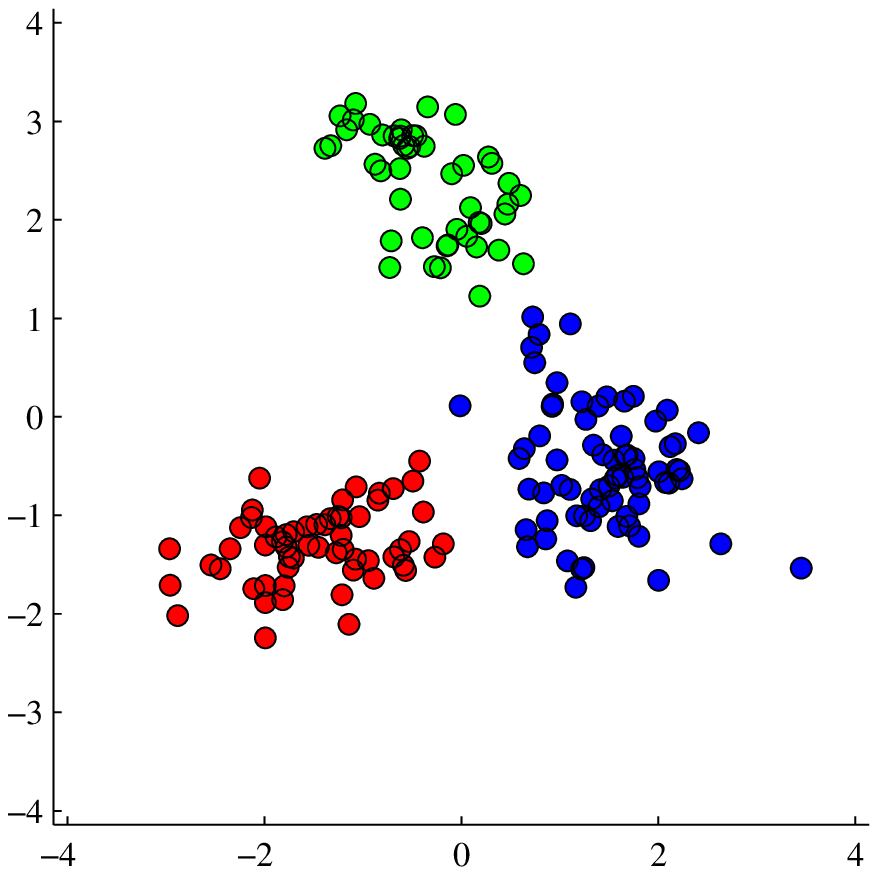}\includegraphics[height=3.5cm]{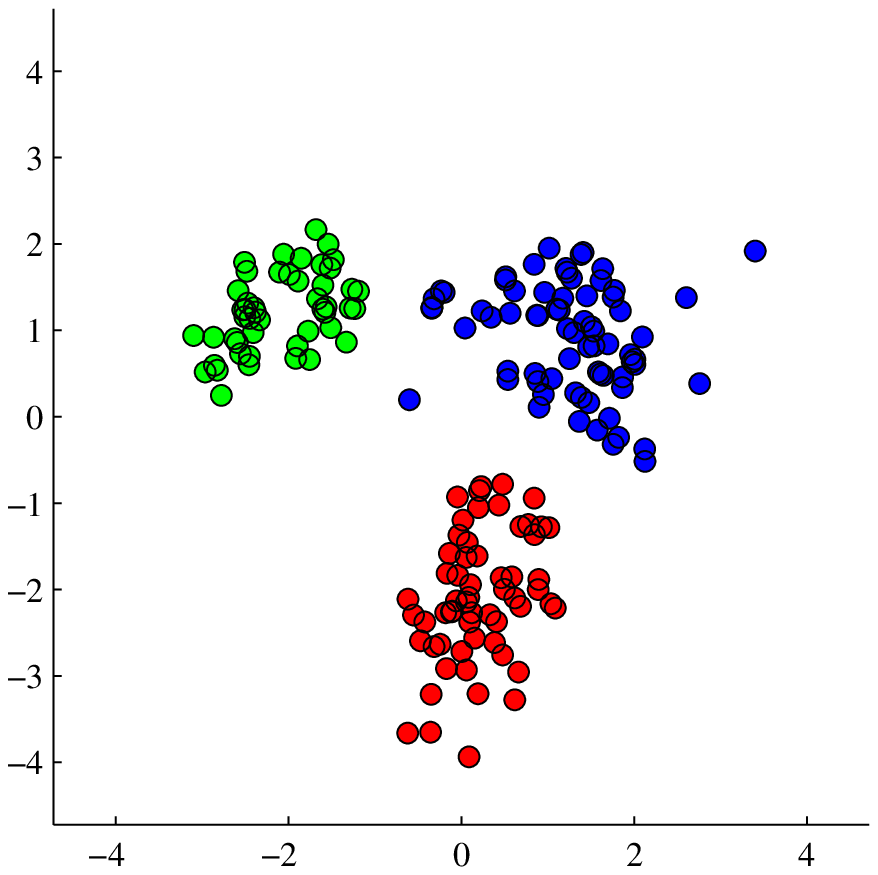}
  \\
  (a) gKDR \hspace*{3cm} (b) KDR \\
  \includegraphics[height=3.5cm]{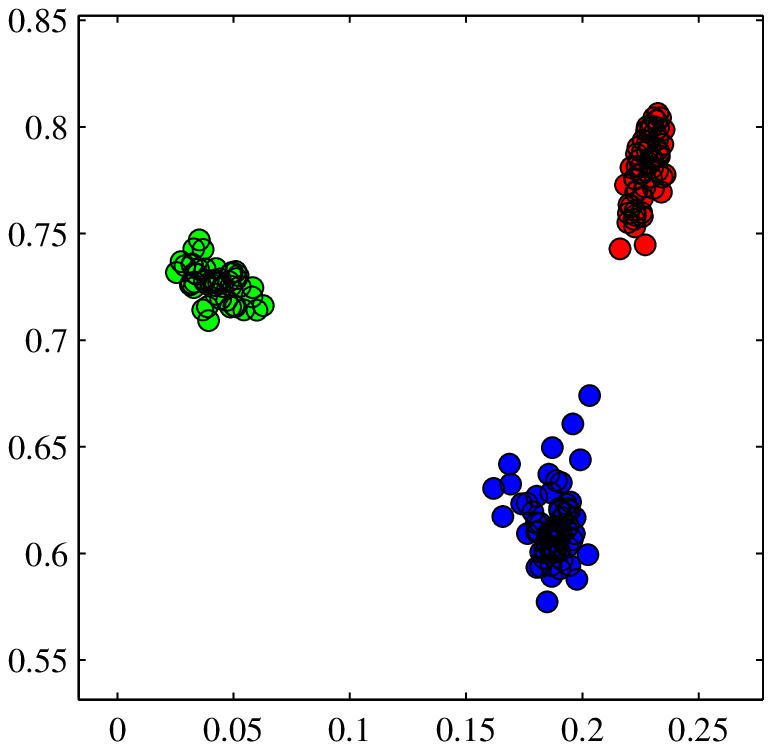} \hspace*{5mm}
  \includegraphics[height=3.5cm]{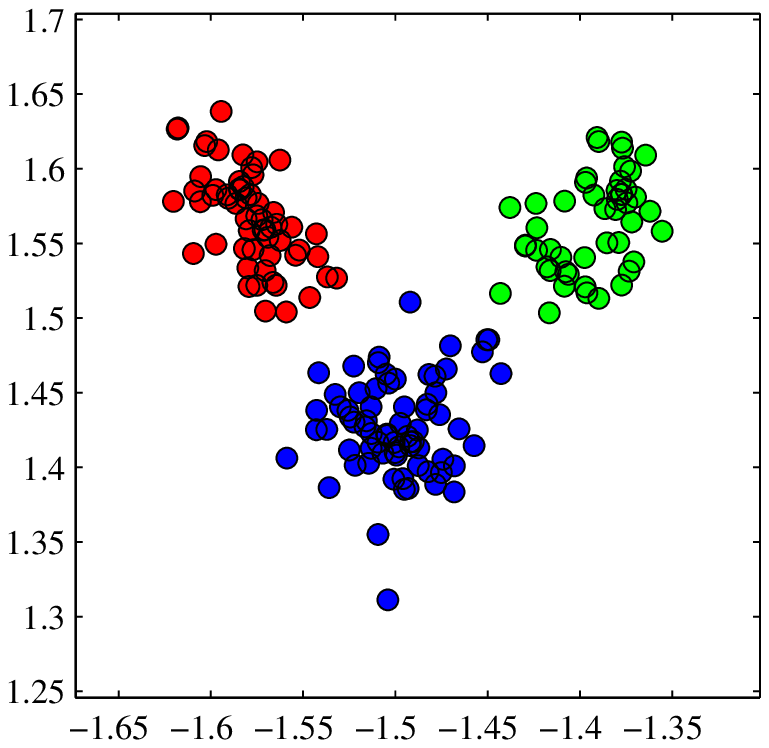}\hspace*{5mm}
  \includegraphics[height=3.5cm]{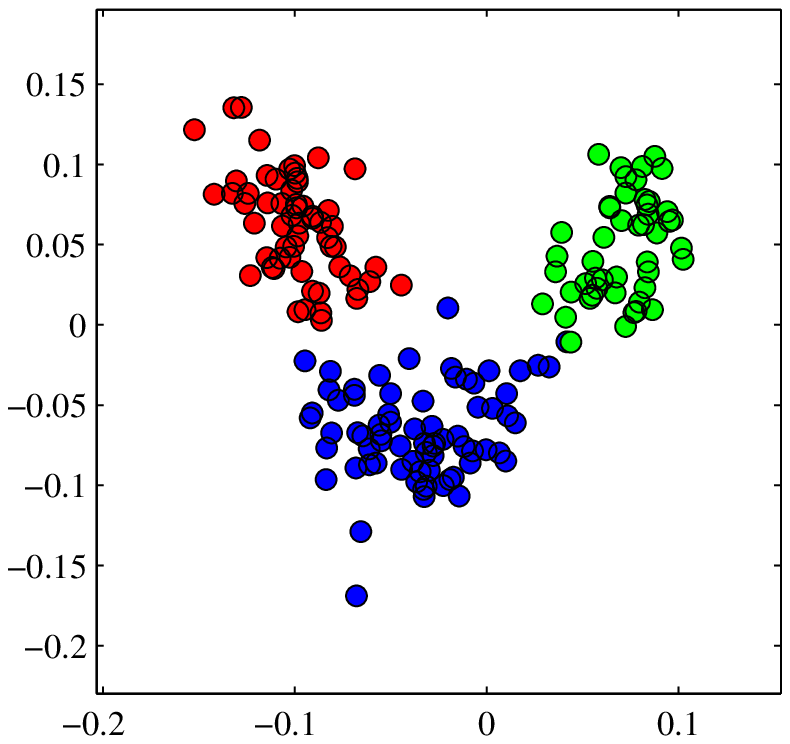} \\
  (c) KCCA ($\sigma=$2*MedD) \hspace*{3mm} (d) KCCA ($\sigma=$10*MedD) \hspace*{3mm} (e) KCCA ($\sigma=$100*MedD)
  \caption{Two dimensional plots of Wine data by gKDR, KDR, and Kernel CCA. MedD means the median of pairwise distances among $X_i$ \cite{Gretton_etal_nips07}. }\label{fig:wine}
\end{figure}

One way of evaluating dimension reduction methods in supervised learning is to consider the classification or regression accuracy after projecting data onto the estimated subspaces.  We next use three data sets for binary classification, {\em heart-disease}, {\em ionoshpere}, and {\em breast-cancer-Wisconsin}, from UCI repository (see Table \ref{tbl:data_descr}), and compare the classification errors with gKDR-v and KDR.

The classification rates with kNN classifiers (${\rm k}=7$) for projected data are shown in Fig.~\ref{fig:gKDRvsKDR}.  We can see that the classification ability of estimated subspaces by gKDR-v is competitive to those given by KDR: slightly worse in Ionosphere, and slightly better in Breast-cancer-Wisconsin.  The computation of gKDR-v for these data sets can be hundreds or thousands times faster than that of KDR. For each parameter set, the computational time of gKDR vs KDR was, in {\em Heart-disease} 0.044 sec / 622 sec ($d=20$), in {\em Ionoshpere} 0.l03 sec / 84.77 sec ($d=20$), and in {\em Breast-cancer-Wisconsin} 0.116 sec /  615 sec ($d=11$).

\begin{figure}
\centering
  \includegraphics[height=3.4cm]{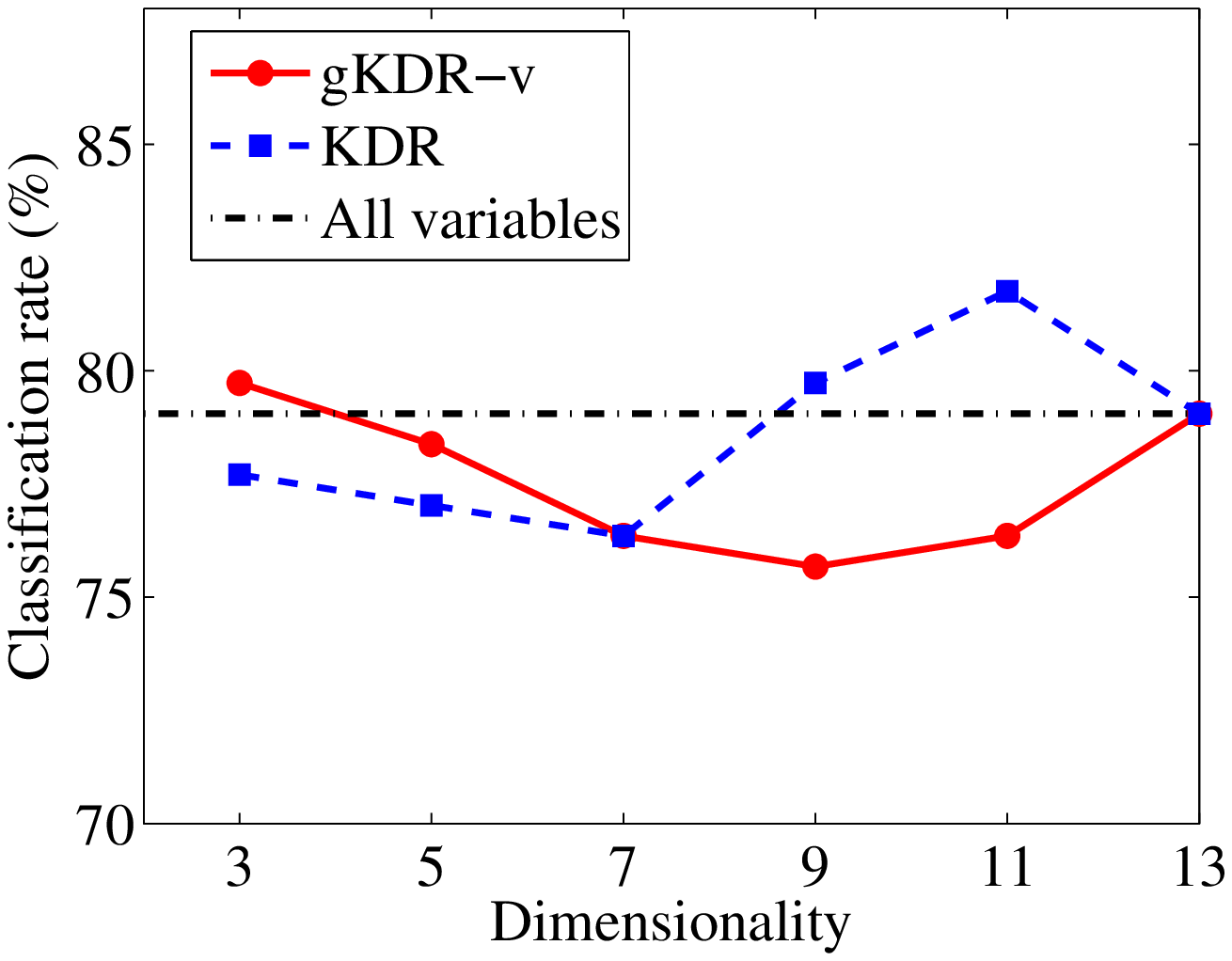}
  \includegraphics[height=3.4cm]{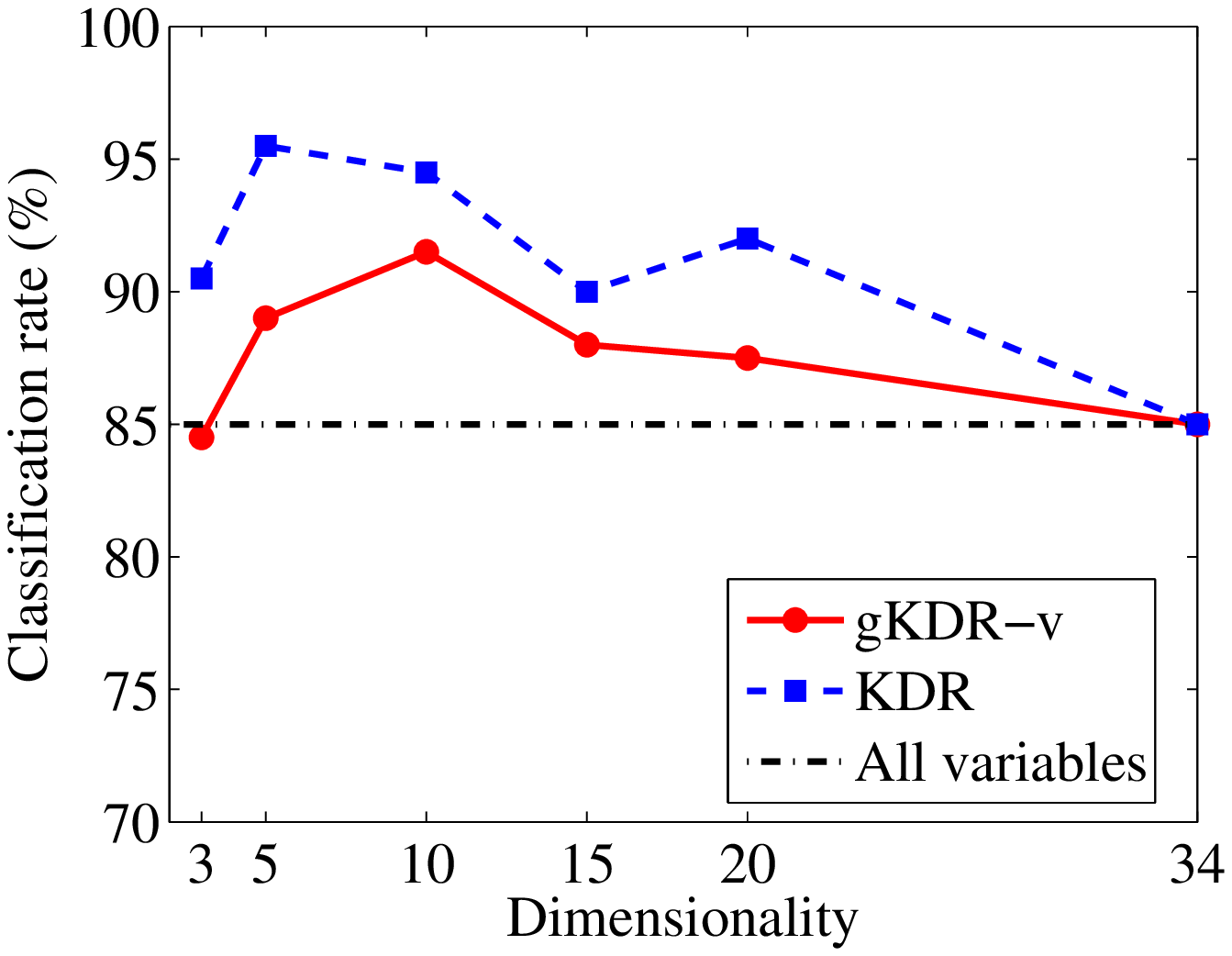}
  \includegraphics[height=3.4cm]{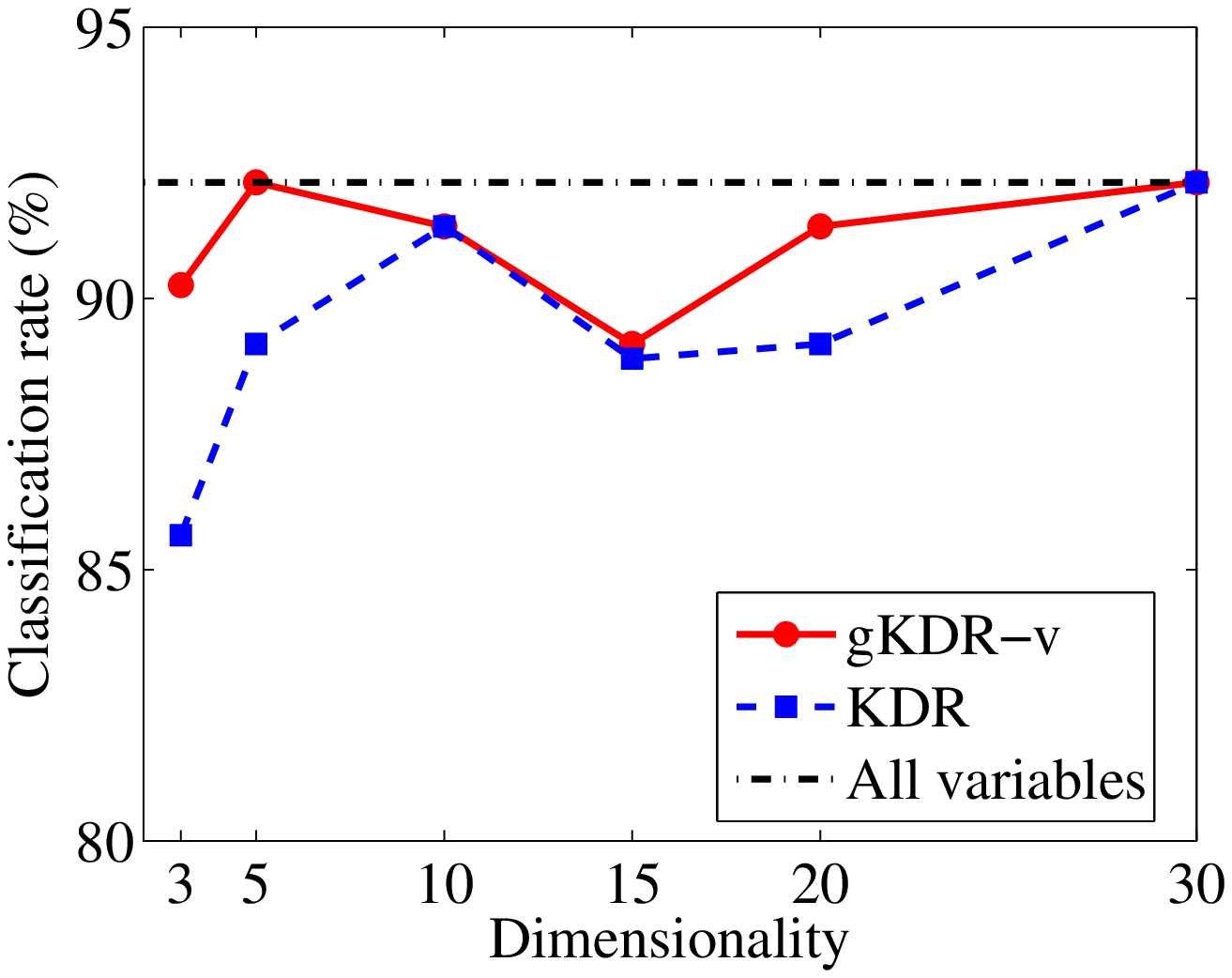}\\
  \hspace*{1cm}(a) Heart Disease \hspace*{1.8cm} (b) Ionoshpere \hspace*{1.7cm} (c) Breast-cancer-Wisconsin
  \caption{Classification accuracy with gKDR-v and KDR for binary classification problems}\label{fig:gKDRvsKDR}
\end{figure}

The next two data sets are larger in the sample size and dimensionality, for which the optimization of KDR is difficult to apply.
The first one is 2007 images of USPS handwritten digit data set used in \cite{Lesong_etal_NIPS2007}, where 256 gray scale pixels are provided as $X$ for each image.  First we make a three dimensional plot for the subset of 500 images with classes ``1" through ``5", as in the similar way to \cite{Wang_etalNIPS2010}.  The result is shown in Fig.~\ref{fig:USPS500}.   We can see, although this is a linear projection, the subspace found by gKDR separates the five classes reasonably well.

\begin{figure}[t]
  \centering
  \includegraphics[height=3.5cm]{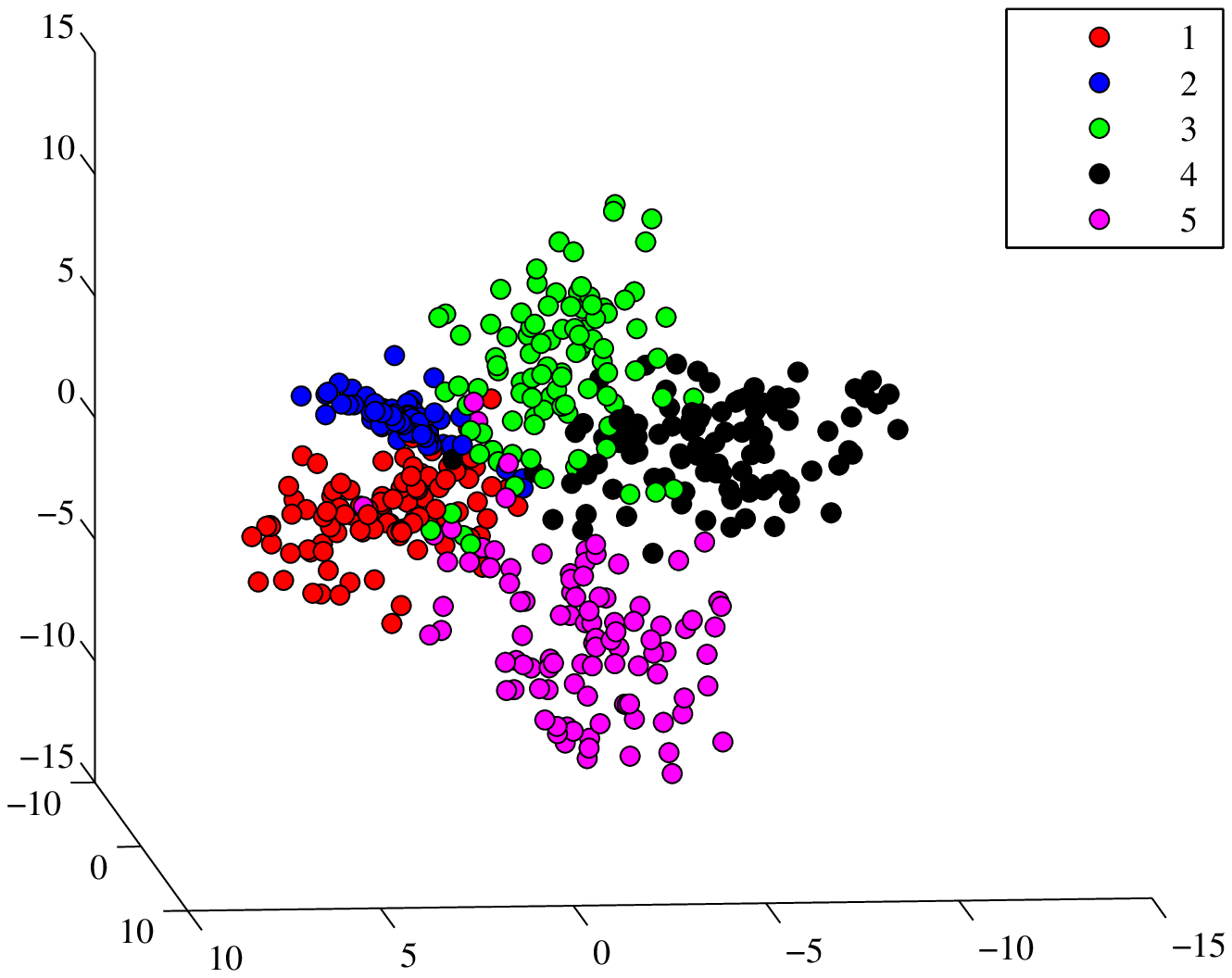}
  \includegraphics[height=3.5cm]{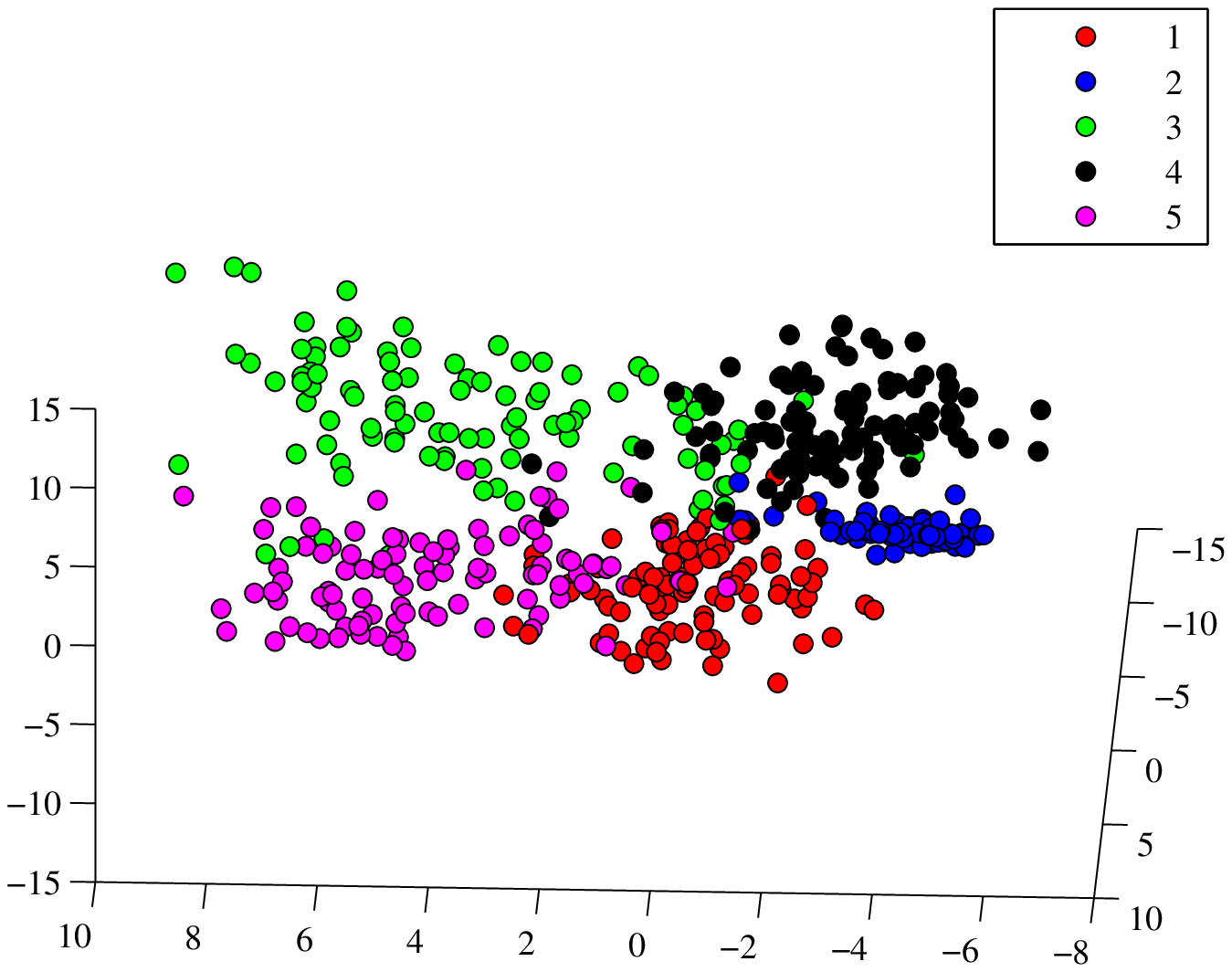}\\
  \caption{Three dimensional plots of USPS data (5 classes) from two different angles.}\label{fig:USPS500}
\end{figure}

We evaluate the classification errors by the simple kNN classifier (${\rm k}=5$) with the data projected onto estimated subspaces, using 1000 images for training and the rest for testing.  We compare gKDR with CCA as a baseline.  Table \ref{tbl:usps} shows that the subspaces found by gKDR (-i,-v) have much better classification ability than those given by CCA.  As in the previous cases, gKDR and gKDR-v show similar errors, and gKDR-i (5 iterations) improves them slightly.

\begin{table}
  \centering
{\small
  \begin{tabular}{c|ccccccc}
    \hline
    Dim. & 3 & 5 & 7 & 9 &  15 & 20 & 25\\
    \hline
    gKDR & 56.82 & 27.96 & 19.00 & 16.66 &   -- & -- & -- \\
    gKDR-i & 39.81 & 26.17 & 18.62 & 15.06 &  -- & -- & -- \\
    gKDR-v & 47.78 & 25.89 & 18.62 &  15.92 & 12.43 & 11.73 & 12.67\\
    CCA & 51.05 & 32.62 & 23.96 & 24.49  &  -- & -- & -- \\
    \hline
  \end{tabular}
  }
  \vspace*{-2mm}
  \caption{USPS2007: classification errors for test data (percentage)}\label{tbl:usps}
\end{table}

The second large data set is ISOLET, taken from UCI repository \cite{UCI_repository}.  The data set provides 617 dimensional continuous features of speech signals for each of 26 alphabets.  In addition to 6238 training data, 1559 test data are separately provided.  We evaluate the classification errors with the kNN classifier (${\rm k}=5$) to see the effectiveness of the estimated subspaces.  Table \ref{tbl:isolet} shows the error rates of classification for the test data after dimension reduction.  To save computational time, we did not use gKDR-i.  From the information on the data at the UCI repository, the best performance with neural networks and C4.5 with ECOC are 3.27\% and 6.61\%, respectively.  In comparison with these results, we can see the simple kNN classification shows competitive performance on the low dimensional subspaces found by gKDR and gKDR-v.

\begin{table}
  \centering
  {\small
  \begin{tabular}{c|cccccccccc}
    \hline
    Dim. & 5 & 10 & 15 & 20 & 25 & 30 & 35 &  40 & 45 & 50\\
    \hline
    gKDR & 30.21 & 13.53 & 7.70 & 4.55 &  4.23 & -- & -- & --& -- & --\\
    gKDR-v & 29.44  & 13.15 & 8.28 & 4.55 & 3.91 & 4.81 & 5.26 & 5.26 & 5.77 & 5.58 \\
    CCA & 22.77 & 15.78 & 8.72 & 6.74 &  7.18 & -- & -- & --& -- & --\\
    \hline
  \end{tabular}
  }
  \vspace*{-2mm}
  \caption{ISOLET: classification errors for test data (percentage)}\label{tbl:isolet}
\end{table}

\section{Concluding remarks}

We have proposed a method for gradient-based kernel dimension reduction and its two variants, which provide general approach for dimension reduction in supervised learning; they have wide applicability with little restriction on the distribution or type of the variables, and the computation is done with simple linear algebra.

As discussed in Sec.~\ref{sec:gKDR_discussion}, gKDR may solve the problem of the existing gradient methods that they do not work if the regression function has the degenerate average derivative.  It is then interesting to make a theoretical question whether gKDR can find the true EDR space.  This is within our future works.

This paper focuses only on the supervised setting, but it may be possible to extend the proposed method to the unsupervised cases in a similar way employed in \cite{Wang_etalNIPS2010}.
Extension to nonlinear feature extraction is also important in some practical problems. As we discuss in Introduction, applying a nonlinear transform will give a straightforward extension.  Another interesting question is how we can ``kernelize" gKDR to replace the linear features to nonlinear ones.  This is not as straightforward as many other kernel methods, since the differentiation with respect to the feature map is involved.  This is also within our interesting future directions.

\appendix

\section{Consistency of the kernel estimator for the regression function}

We discuss the consistency of the estimator $\bigr(\widehat{C}_{XX}+\eps_n I\bigr)^{-1}\hC_{XY} g$ for $E[g(Y)|X=\cdot]$.  While this consistency has been already proved in some literature such as \cite{SmaleZhou2005,SmaleZhou2007,CaponnettoDeVito2007, Bauer_etal2007} in various contexts, we show the proof in our terminology for completeness.

\begin{thm}\label{thm:regression}
Let $g\in\Hy$ and assume that $E[g(Y)|X=\cdot]\in
\mathcal{R}(C_{XX}^\nu)$ for $\nu \geq 0$, where
$\mathcal{R}(C_{XX}^0)$ for $\nu=0$ is interpreted as $\Hx$.  If
$\eps_n\to 0$ ($n\to\infty$),  then
\[
\bigl\| \bigl(\hC_{XX}+\eps_n I\bigr)^{-1}\hC_{XY} g
 - E[g(Y)|X=\cdot]\bigr\|_{\Hx}
\]
is of the order
\[
    \begin{cases}
        O_p(\eps_n^{-1} n^{-1/2}) + O(\eps_n^\nu), & \text{for } 0 \leq \nu < 1, \\
        O_p(\eps_n^{-1} n^{-1/2}) + O(\eps_n), & \text{for }\nu \geq 1.
    \end{cases}
\]
Consequently, if $\eps_n = n^{-\max\{\frac{1}{4},\frac{1}{2\nu+2}\}}$, then the estimator is consistent of the order $O\bigl( n^{-\min\{\frac{1}{4},\frac{\nu}{2\nu+2}\}}\bigr)$.
\end{thm}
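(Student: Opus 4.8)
The plan is to carry out a bias--variance decomposition of the regularized estimator. Write $f_0 := E[g(Y)|X=\cdot]\in\Hx$ and recall from Theorem~\ref{thm:cond_mean} that $C_{XX} f_0 = C_{XY} g$. Inserting the population-regularized quantity $(C_{XX}+\eps_n I)^{-1}C_{XY} g$, I would split
\[
(\hC_{XX}+\eps_n I)^{-1}\hC_{XY} g - f_0 = S_n + R_n,
\]
where $S_n = \bigl[(\hC_{XX}+\eps_n I)^{-1}\hC_{XY} - (C_{XX}+\eps_n I)^{-1}C_{XY}\bigr]g$ is the stochastic part and $R_n = (C_{XX}+\eps_n I)^{-1}C_{XY} g - f_0$ is the deterministic regularization bias, and bound the two separately.

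For the bias, substituting $C_{XY} g = C_{XX} f_0$ gives the identity $R_n = -\eps_n (C_{XX}+\eps_n I)^{-1} f_0$. I would then use the source condition $f_0 = C_{XX}^{\nu} h$ with $h\in\Hx$ together with the functional calculus for the bounded self-adjoint operator $C_{XX}$, which reduces $\|R_n\|_{\Hx}$ to $\sup_{0\le\lambda\le\|C_{XX}\|}\tfrac{\eps_n\lambda^{\nu}}{\lambda+\eps_n}\,\|h\|_{\Hx}$. An elementary estimate of this scalar supremum yields $O(\eps_n^{\nu})$ for $0\le\nu<1$ and $O(\eps_n)$ for $\nu\ge1$, exactly the two cases in the statement.

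For the stochastic part I would abbreviate $A_n=\hC_{XX}+\eps_n I$ and $A=C_{XX}+\eps_n I$ and split $S_n = A_n^{-1}(\hC_{XY}-C_{XY})g + (A_n^{-1}-A^{-1})C_{XY} g$. The first summand is controlled by $\|A_n^{-1}\|\,\|(\hC_{XY}-C_{XY})g\|\le\eps_n^{-1}O_p(n^{-1/2})$, using the crude bound $\|A_n^{-1}\|\le\eps_n^{-1}$ and the $\sqrt{n}$-consistency of $\hC_{XY}$ in operator norm cited earlier from \cite{Gretton_etal05AISTATS}. For the second summand I would invoke the resolvent identity $A_n^{-1}-A^{-1}=A_n^{-1}(C_{XX}-\hC_{XX})A^{-1}$, so that the term becomes $A_n^{-1}(C_{XX}-\hC_{XX})(C_{XX}+\eps_n I)^{-1}C_{XY} g$; here the key observation is that $(C_{XX}+\eps_n I)^{-1}C_{XY} g = (C_{XX}+\eps_n I)^{-1}C_{XX}^{\nu+1} h$ is bounded in norm uniformly in $n$ by $\|C_{XX}\|^{\nu}\|h\|_{\Hx}$, so that together with $\|A_n^{-1}\|\le\eps_n^{-1}$ and $\|\hC_{XX}-C_{XX}\|=O_p(n^{-1/2})$ this summand is again $O_p(\eps_n^{-1}n^{-1/2})$. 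Hence $\|S_n\|_{\Hx} = O_p(\eps_n^{-1}n^{-1/2})$.

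Combining the two bounds gives a total error of order $O_p(\eps_n^{-1}n^{-1/2}) + O(\eps_n^{\min\{\nu,1\}})$, and balancing the two terms produces the asserted choice $\eps_n = n^{-\max\{1/4,\,1/(2\nu+2)\}}$ and rate $O(n^{-\min\{1/4,\,\nu/(2\nu+2)\}})$. The step I expect to be the main obstacle is the bias estimate in the regime $\nu\ge1$: the functional-calculus supremum is then attained near the top of the spectrum, so it is essential to use the boundedness of $C_{XX}$ (itself guaranteed by the assumed boundedness of $k_\cX$) to cap the rate at $O(\eps_n)$ rather than a spuriously faster one. This is the saturation phenomenon of Tikhonov regularization, whose qualification is $1$, and it is the reason the exponent is $\min\{\nu,1\}$. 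A secondary point demanding care is the uniform-in-$n$ boundedness of the population-regularized quantity in $S_n$, for which the source condition --- read as $C_{XY} g\in\mathcal{R}(C_{XX}^{\nu+1})$ --- is precisely what is needed.
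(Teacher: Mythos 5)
Your proposal is correct and follows essentially the same route as the paper's own proof: the same bias--variance split at the population-regularized quantity $(C_{XX}+\eps_n I)^{-1}C_{XY}g$, the same resolvent identity $A_n^{-1}-A^{-1}=A_n^{-1}(C_{XX}-\hC_{XX})A^{-1}$ with $\|A_n^{-1}\|\le\eps_n^{-1}$ and the $\sqrt{n}$-consistency of the empirical covariance operators for the stochastic term, and the same spectral estimate of $\sup_\lambda \eps_n\lambda^\nu/(\lambda+\eps_n)$ under the source condition for the bias (the paper phrases it via the eigendecomposition of $C_{XX}$ rather than your cleaner identity $R_n=-\eps_n(C_{XX}+\eps_n I)^{-1}f_0$, but the computation is identical). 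Your closing remarks on Tikhonov saturation at qualification $1$ and on why $\|(C_{XX}+\eps_n I)^{-1}C_{XX}^{\nu+1}h\|\le\|C_{XX}\|^{\nu}\|h\|_{\Hx}$ is the needed uniform bound correctly identify the two points the paper's proof also relies on.
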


\begin{proof}
Take $\eta\in \Hx$ such that $E[g(Y)|X=\cdot] = C_{XX}^\nu \eta$.  From Theorem \ref{thm:cond_mean}, we have
$C_{XY}g = C_{XX}E[g(Y)|X=\cdot] = C_{XX}^{\nu+1}\eta$.

First, we show
\begin{equation}\label{eq:estim_er}
\bigl\| \bigl(\hC_{XX}+\eps_n I\bigr)^{-1}\hC_{XY} g
- (C_{XX}+\eps_n I)^{-1}C_{XY} g \bigr\|_{\Hx}
= O_p(\eps_n^{-1} n^{-1/2}) \qquad (n\to\infty).
\end{equation}
Since $B^{-1}-A^{-1} = B^{-1}(A-B)A^{-1}$ for any invertible operators $A$ and $B$,
the left hand side is upper bounded by
\begin{multline*}
\bigl\| \bigl(\widehat{C}_{XX}^{(n)}+\eps_n I\bigr)^{-1}\bigl(
C_{XX}-\hC_{XX}\bigr)(C_{XX}+\eps_n I)^{-1}C_{XY}g \bigr\|_\Hx
\\
+ \bigl\| \bigl(\widehat{C}_{XX}^{(n)}+\eps_n I\bigr)^{-1}\bigl(
\hC_{XY} -C_{XY}\bigr)g  \bigr\|_{\Hx}.
\end{multline*}
From $C_{XY}g=C_{XX}^{\nu+1}\eta$, we have $\|(C_{XX}+ \eps_n I)^{-1}C_{XY}g\|\leq \|C_{XX}^\nu\eta\|_{\Hx}$.  Combination of this fact with $\|\hC_{XX}- C_{XX}\|=O_p(n^{-1/2})$ proves that the first term is of the order
$O_p(\eps_n^{-1} n^{-1/2})$.  The second term is of the same order
from $\|\hC_{XY}- C_{XY}\|=O_p(n^{-1/2})$, which implies
\eq{eq:estim_er}.

Next, we derive the upper bounds
\begin{equation}\label{eq:approx_er_rate}
\bigl\| \bigl(C_{XX}+\eps_n I\bigr)^{-1}C_{XY} g
- E[g(Y)|X=\cdot]\bigr\|_{\Hx} =
\begin{cases}
        O(\eps_n^\nu), & \text{for } 0 \leq \nu < 1, \\
        O(\eps_n), & \text{for }\nu \geq 1.
    \end{cases}
\end{equation}
It follows from $E[g(Y)|X=\cdot] = C_{XX}^\nu \eta$ and $C_{XY}g = C_{XX}^{\nu+1} \eta$ that
\[
(C_{XX}+\eps_n I)^{-1}C_{XY} g - E[g(Y)|X=\cdot] = (C_{XX}+\eps_n I)^{-1}C_{XX}^{\nu+1} \eta - C_{XX}^\nu \eta.
\]
Let $C_{XX} = \sum_{i} \lambda_i \phi_i \la \phi_i, \cdot\ra$ be the eigendecomposition of $C_{XX}$ such that $\lambda_i>0$ are the eigenvalues and $\phi_i$ are the ohorthonormal eigenvectors.  The eigendespectrum of the operator $(C_{XX}+\eps_n I)^{-1}C_{XX}^{\nu+1} \eta - C_{XX}^\nu$ is then given by
\[
    \frac{\lambda_i^{\nu+1}}{\lambda_i + \eps_n} - \lambda_i^\nu =  \frac{\lambda_i^{\nu}\eps_n}{\lambda_i + \eps_n}\qquad (i=1,2,\ldots).
\]
If $0\leq \nu < 1$, from
$\frac{\lambda^{\nu}\eps_n}{\lambda +\eps_n} = \eps_n^\nu
\frac{\lambda^{\nu}\eps_n^{1-\nu}}{\lambda +\eps_n}\leq
\eps_n^\nu \frac{\eps_n^{1-\nu}}{(\lambda +\eps_n)^{1-\nu}}$ and
$\bigl| \frac{\eps_n^{1-\nu}}{(\lambda +\eps_n)^{1-\nu}}\bigr|\leq
1$ we have
\[
    \| (C_{XX}+\eps_n I)^{-1}C_{XX}^{\nu+1} \eta - C_{XX}^\nu\| \leq \eps_n^\nu.
\]
If $\nu\geq 1$, then $\frac{\lambda^{\nu}\eps_n}{\lambda +\eps_n}
\leq \eps_n\frac{\lambda^{\nu}}{\lambda +\eps_n} \leq \eps_n
\lambda^{\nu - 1}$.  It follows
\[
\| (C_{XX}+\eps_n I)^{-1}C_{XX}^{\nu+1} \eta - C_{XX}^\nu\| \leq \eps_n \|C_{XX}\|^{\nu-1}.
\]
From Eqs. (\ref{eq:estim_er}) and
(\ref{eq:approx_er_rate}), the proof is completed.
\end{proof}

\section{Proof of Theorem \ref{thm:convergence}}

Let $g_a=\frac{\partial k_\cX(\cdot,x)}{\partial x^a}$.  Since
\begin{align*}
M_{ab}(x) & = \Bigl\la \bigl\la E[k_\cY(*,Y)|X=\cdot], g_a\ra_\Hx,  \bigl\la E[k_\cY(*,Y)|X=\cdot], g_b\bigr\ra_\Hx\Bigr\ra_\Hy   \\
& = \bigl\la E[k_\cY(*,Y)|g_a(X)],  E[k_\cY(*,Y)|g_b(X)] \bigr\ra_\Hy
\end{align*}
and
\[
\widehat{M}_{n,ab}(x) = \bigl\la \hC_{YX}\bigl(\hC_{XX}+\eps_n I\bigr)^{-1}
g_a, \hC_{YX}\bigl(\hC_{XX}+\eps_n I\bigr)^{-1} g_b \bigr\ra_{\Hy},
\]
we have
\begin{align*}
& \bigl|\widehat{M}_{n,ab}(x)-M_{ab}(x)\bigr| \\
& \leq \bigl| \bigl\la \hC_{YX}\bigl(\hC_{XX}+\eps_n I\bigr)^{-1}
g_a, \hC_{YX}\bigl(\hC_{XX}+\eps_n I\bigr)^{-1} g_b - E[k_\cY(*,Y)|g_b(X)] \bigr\ra_{\Hy}\bigr|   \\
& + \bigl| \bigl\la
\hC_{YX}\bigl(\hC_{XX}+\eps_n I\bigr)^{-1}
g_a - E[k_\cY(*,Y)|g_a(X)],  E[k_\cY(*,Y)|g_b(X)] \bigr\ra_{\Hy}\bigr| .
\end{align*}
Noting $\eps_n \sqrt{n}\to \infty$ and the expression
\[
\bigl(\hC_{XX}+\eps_n I\bigr)^{-1} = (C_{XX}+\eps_n I)^{-1}\bigl\{ I - \bigl(C_{XX}-\hC_{XX}\bigr)(C_{XX}+\eps_n I)^{-1}\bigr\}^{-1},
\]
Lemma 4 in \cite{SmaleZhou2007} shows that
\[
   \bigl\|  C_{XX}\bigl(\hC_{XX}+\eps_n I\bigr)^{-1} \bigr\| = O_p(1).
\]
From $g_a = C_{XX}^{\beta+1}\eta$ for some $\eta\in\Hx$, we have $\| \hC_{YX}\bigl(\hC_{XX}+\eps_n I\bigr)^{-1}g_a\| =O_p(1)$.  For the proof of the first assertion of Theorem \ref{thm:convergence}, it is then sufficient to prove the following theorem.

\begin{thm}\label{thm:rates}
Assume that $g\in\Hx$ satisfies $\mathcal{R}(C_{XX}^{\beta+1})$ for some $\beta\geq 0$ and that $E[k_\cY(y,Y)|X=\cdot]\in \Hx$ for every $y\in\cY$. Then, for $\eps_n >0$ with $\eps_n = n^{-\max\{\frac{1}{3}, \frac{1}{2(\beta+1)} \}}$, we have
\[
\bigl\| \hC_{YX}\bigl(\hC_{XX}+\eps_n I_n\bigr)^{-1} g - E[k_\cY(\cdot,Y)|g(X)]\bigr\|_\Hy =O_p\Bigl( n^{-\min\{\frac{1}{3}, \frac{2\beta+1}{4\beta+4}\}}\Bigr)
\]
as $n\to \infty$.
\end{thm}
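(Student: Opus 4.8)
The plan is to mimic the bias--variance template of Theorem~\ref{thm:regression}, the only new ingredient being the outer factor $\hC_{YX}$, whose population counterpart $C_{YX}$ supplies an extra half power of smoothing through the factorization $C_{YX}=C_{YY}^{1/2}VC_{XX}^{1/2}$ with $\|V\|\le 1$; this is exactly what sharpens the exponent relative to Theorem~\ref{thm:regression}. First I would pin down the population target: writing the range condition as $g=C_{XX}^{\beta+1}\eta$ with $\eta\in\Hx$, Theorem~\ref{thm:cond_mean} applied to each $k_\cY(y,\cdot)$ gives $E[k_\cY(\cdot,Y)|g(X)]=C_{YX}C_{XX}^{-1}g=C_{YX}C_{XX}^{\beta}\eta$, which is well defined even though $C_{XX}^{-1}$ is not bounded. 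I would then decompose
\[
\hC_{YX}(\hC_{XX}+\eps_n I)^{-1}g - C_{YX}C_{XX}^{-1}g = T_{\mathrm{est}} + T_{\mathrm{reg}},
\]
where $T_{\mathrm{reg}}=C_{YX}(C_{XX}+\eps_n I)^{-1}g-C_{YX}C_{XX}^{-1}g$ is the regularization (bias) term and $T_{\mathrm{est}}=\hC_{YX}(\hC_{XX}+\eps_n I)^{-1}g-C_{YX}(C_{XX}+\eps_n I)^{-1}g$ the estimation (variance) term, and bound the two pieces separately.

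For $T_{\mathrm{reg}}$, substituting $g=C_{XX}^{\beta+1}\eta$ collapses it to $-\eps_n C_{YX}(C_{XX}+\eps_n I)^{-1}C_{XX}^{\beta}\eta$. Inserting the factorization of $C_{YX}$ and reading off the spectrum of $C_{XX}^{\beta+1/2}(C_{XX}+\eps_n I)^{-1}$, the norm is bounded by $\eps_n\,\|C_{YY}^{1/2}\|\,\|\eta\|\,\sup_{\lambda>0}\frac{\lambda^{\beta+1/2}}{\lambda+\eps_n}$. The elementary estimates already used in Theorem~\ref{thm:regression} (namely $\frac{\lambda^{s}}{\lambda+\eps_n}\le\eps_n^{s-1}$ for $0\le s\le1$, and $\le\lambda^{s-1}$ for $s\ge1$, here with $s=\beta+1/2$) then give $T_{\mathrm{reg}}=O(\eps_n^{\min\{\beta+1/2,\,1\}})$.

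For $T_{\mathrm{est}}$ I would split off $(\hC_{YX}-C_{YX})(C_{XX}+\eps_n I)^{-1}g$, which is $O_p(n^{-1/2})$ since $\|\hC_{YX}-C_{YX}\|=O_p(n^{-1/2})$ and $\|(C_{XX}+\eps_n I)^{-1}g\|\le\|C_{XX}^{\beta}\eta\|=O(1)$, and rewrite the remainder via $B^{-1}-A^{-1}=B^{-1}(A-B)A^{-1}$ as $\hC_{YX}(\hC_{XX}+\eps_n I)^{-1}(C_{XX}-\hC_{XX})(C_{XX}+\eps_n I)^{-1}g$. With $\|C_{XX}-\hC_{XX}\|=O_p(n^{-1/2})$ and $\|(C_{XX}+\eps_n I)^{-1}g\|=O(1)$, this whole term is $O_p(\eps_n^{-1/2}n^{-1/2})$ \emph{provided} the key estimate $\|\hC_{YX}(\hC_{XX}+\eps_n I)^{-1}\|=O_p(\eps_n^{-1/2})$ holds, and establishing this is the main obstacle. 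I would prove it by writing $\hC_{YX}=C_{YX}+(\hC_{YX}-C_{YX})$: the perturbation contributes $\|\hC_{YX}-C_{YX}\|\,\eps_n^{-1}=O_p(n^{-1/2}\eps_n^{-1})$, which is $o_p(\eps_n^{-1/2})$ because $n\eps_n\to\infty$; and for the main term I would factor $C_{YX}(\hC_{XX}+\eps_n I)^{-1}=C_{YY}^{1/2}V\big[C_{XX}^{1/2}(\hC_{XX}+\eps_n I)^{-1/2}\big](\hC_{XX}+\eps_n I)^{-1/2}$. The last factor has norm at most $\eps_n^{-1/2}$, and the bracketed factor is $O_p(1)$: the bound $\|C_{XX}(\hC_{XX}+\eps_n I)^{-1}\|=O_p(1)$ (Lemma~4 of \cite{SmaleZhou2007}, valid since $\eps_n\sqrt n\to\infty$) means $C_{XX}\preceq K(\hC_{XX}+\eps_n I)$ for some $K=O_p(1)$, and operator monotonicity of $t\mapsto t^{1/2}$ upgrades this to $C_{XX}^{1/2}\preceq K^{1/2}(\hC_{XX}+\eps_n I)^{1/2}$, i.e.\ $\|C_{XX}^{1/2}(\hC_{XX}+\eps_n I)^{-1/2}\|\le K^{1/2}$. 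This passage from the population half-power smoothing to the empirical one is the technical heart of the argument.

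Finally I would combine the two bounds to obtain the overall rate $O_p(\eps_n^{-1/2}n^{-1/2})+O(\eps_n^{\min\{\beta+1/2,\,1\}})$ and balance the contributions. Equating $\eps_n^{-1/2}n^{-1/2}$ with $\eps_n^{\min\{\beta+1/2,\,1\}}$ yields $\eps_n=n^{-\max\{1/3,\,1/(2\beta+2)\}}$ and the stated rate $n^{-\min\{1/3,\,(2\beta+1)/(4\beta+4)\}}$; the two regimes meet at $\beta=1/2$, where both exponents equal $1/3$. The only delicate point I anticipate is the borderline behaviour of $\eps_n\sqrt n$ near $\beta=0$, where it is merely bounded rather than divergent; I would handle this exactly as in \cite{SmaleZhou2007}, whose Lemma~4 already delivers the required $O_p(1)$ control in that regime.
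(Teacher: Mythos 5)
Your proposal is correct, and its skeleton coincides with the paper's: the same split into a regularization (bias) term and an estimation (variance) term, the same resolvent identity $B^{-1}-A^{-1}=B^{-1}(A-B)A^{-1}$ for the latter, the same identification $E[k_\cY(\cdot,Y)|g(X)]=C_{YX}C_{XX}^{\beta}\eta$, the same spectral bound $\sup_{\lambda}\frac{\eps_n\lambda^{\beta+1/2}}{\lambda+\eps_n}\le \mathrm{const}\cdot\eps_n^{\min\{\beta+1/2,\,1\}}$ for the bias, and the same final balancing. The one genuine divergence is how you obtain the key estimate $\bigl\|\hC_{YX}\bigl(\hC_{XX}+\eps_n I\bigr)^{-1}\bigr\|=O_p(\eps_n^{-1/2})$. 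The paper gets it in one deterministic line by applying Baker's factorization to the \emph{empirical} covariances, $\hC_{YX}=\widehat{C}_{YY}^{(n)1/2}\widehat{W}_{YX}\widehat{C}_{XX}^{(n)1/2}$ with $\|\widehat{W}_{YX}\|\leq 1$ \cite{Baker73}, whence $\|\hC_{YX}(\hC_{XX}+\eps_n I)^{-1}\|\leq \|\hC_{YY}\|^{1/2}\sup_{\lambda\geq 0}\frac{\lambda^{1/2}}{\lambda+\eps_n}\leq \tfrac{1}{2}\sup_y k_\cY(y,y)^{1/2}\,\eps_n^{-1/2}$, an $O(\eps_n^{-1/2})$ bound valid for every $n$ with no relation between $\eps_n$ and $n$ required. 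You instead factorize the population $C_{YX}$ and transfer the half power of $C_{XX}$ onto the empirical resolvent via $\|C_{XX}(\hC_{XX}+\eps_n I)^{-1}\|=O_p(1)$ plus operator monotonicity; that transfer is sound (indeed from $C_{XX}\preceq K(\hC_{XX}+\eps_n I)$ one gets $\|C_{XX}^{1/2}(\hC_{XX}+\eps_n I)^{-1/2}\|\leq K^{1/2}$ directly, even without L\"owner--Heinz, because $(\hC_{XX}+\eps_n I)^{-1/2}C_{XX}(\hC_{XX}+\eps_n I)^{-1/2}$ is similar to $C_{XX}(\hC_{XX}+\eps_n I)^{-1}$ and self-adjoint), but it is costlier and creates exactly the $\beta=0$ borderline you flag: there $\sqrt{n}\,\eps_n$ is only bounded, and the Neumann-series mechanism behind Lemma 4 of \cite{SmaleZhou2007}, which needs $\|(C_{XX}-\hC_{XX})(C_{XX}+\eps_n I)^{-1}\|<1$ with high probability, is not guaranteed, so you should not lean on that lemma at the boundary. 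The soft spot is easily patched without it: $\|C_{XX}(\hC_{XX}+\eps_n I)^{-1}\|\leq 1+\|C_{XX}-\hC_{XX}\|\,\eps_n^{-1}=1+O_p(n^{-1/2}\eps_n^{-1})=O_p(1)$, since $\eps_n\geq n^{-1/2}$ for all $\beta\geq 0$; or, more simply, adopt the paper's empirical factorization, which bypasses the issue entirely. One point in your favor: your unsquared norms in the two intermediate bounds are the correct ones --- the squares on the left-hand sides of \eq{eq:stat_err} and \eq{eq:bias} in the paper are evidently typos, since the stated rates and the final balancing are consistent only with the norms themselves.
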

\begin{proof}
It suffices to show
\begin{equation}\label{eq:stat_err}
\bigl\| \hC_{YX}\bigl(\hC_{XX}+\eps_n I\bigr)^{-1} g - C_{YX}\bigl(C_{XX}+\eps_n I\bigr)^{-1} g \bigr\|_{\Hy}^2 = O_p\bigl(\eps_n^{-1/2} n^{-1/2}\bigr)
\end{equation}
and
\begin{equation}\label{eq:bias}
\bigl\| C_{YX}\bigl(C_{XX}+\eps_n I\bigr)^{-1} g - E[k_\cY(\cdot,Y)|g(X)] \bigr\|_{\Hy}^2 = O\bigl(\eps_n^{\min\{1,(2\beta+1)/2\}}\bigr)
\end{equation}
as $n\to\infty$.  In fact, optimizing the rate derives the assertion of the theorem.

Let $g=C_{XX}^{\beta+1}h$, where $h\in\Hx$.
Since $B^{-1}-A^{-1} = B^{-1}(A-B)A^{-1}$ for any invertible operators $A$ and $B$, the left hand side of \eq{eq:stat_err} is upper bounded by
\begin{multline*}
\bigl\| \hC_{YX}\bigl(\hC_{XX}+\eps_n I\bigr)^{-1}(C_{XX}-\hC_{XX})\bigl(C_{XX}+\eps_n I\bigr)^{-1} C_{XX}^{\beta+1} h \bigr\|_\Hy \\
+ \bigl\| (\hC_{YX}-C_{YX})\bigl(C_{XX}+\eps_n I\bigr)^{-1} C_{XX}^{\beta+1} h \bigr\|_\Hy.
\end{multline*}
By the decomposition $\hC_{YX}=\widehat{C}_{YY}^{(n)1/2}\widehat{W}_{YX}\widehat{C}_{XX}^{(n)1/2}$ with $\|\widehat{W}_{YX}\|\leq 1$ (\cite{Baker73}), we have $\|\hC_{YX}\bigl(\hC_{XX}+\eps_n I\bigr)^{-1}\| = O(\eps_n^{-1/2})$.  It is known that $\|C_{XX}-\hC_{XX}\|=O_p(n^{-1/2})$.  
From these two fact, we see that the first term is of $O_p(\eps_n^{-1/2}n^{-1/2})$.  Since the second term is of $O_p(n^{-1/2})$, \eq{eq:stat_err} is obtained.

For \eq{eq:bias}, first note that for each $y$
\begin{align*}
E[k_\cY(y,Y)|g(X)] & = \la E[k_\cY(y,Y)|X=\cdot], g\ra = \la E[k_\cY(y,Y)|X=\cdot], C_{XX}^{\beta+1}h\ra \\
& = \la C_{XX}E[k_\cY(y,Y)|X=\cdot], C_{XX}^{\beta}h\ra
=
\la C_{XY}k_\cY(y,\cdot), C_{XX}^{\beta}h\ra \\
& = \la k_\cY(y,\cdot), C_{YX}C_{XX}^{\beta}h\ra
= (C_{YX}C_{XX}^\beta h)(y),
\end{align*}
which means $E[k_\cY(\cdot,Y)|g(X)]=C_{YX}C_{XX}^\beta h$.
Let $C_{YX}=C_{YY}^{1/2}W_{YX}C_{XX}^{1/2}$ be the decomposition with $\|W_{YX}\|\leq 1$.  Then, we have
\begin{multline*}
\bigl\| C_{YX}\bigl(C_{XX}+\eps_n I\bigr)^{-1} g - E[k_\cY(\cdot,Y)|g(X)] \bigr\|_{\Hy} \\
= \|C_{YY}^{1/2} W_{YX} \| \bigl\| C_{XX}^{\beta+3/2}\bigl(C_{XX}+\eps_n I\bigr)^{-1} h - C_{XX}^{\beta+1/2} h \bigr\|_{\Hy}.
\end{multline*}
Let $\{\phi_i\}$ be the unit eigenvectors of $C_{XX}$ such that $C_{XX}f = \sum_{i} \lambda_i \la \phi_i, f\ra$.
Then the eigenspectrum of $C_{XX}^{\beta+3/2}\bigl(C_{XX}+\eps_n I\bigr)^{-1}  - C_{XX}^{\beta+1/2}$ is given by
\[
 - \frac{\eps_n \lambda_i^{(2\beta+1)/2}}{\lambda_i + \eps_n} \qquad (i=1,2,\ldots).
\]
If $0\leq \beta < 1/2$, we have $\frac{ \eps_n\lambda_i^{(2\beta+1)/2} }{ \lambda_i+\eps_n }= \frac{ \lambda_i^{(2\beta+1)/2} }{ (\lambda_i+\eps_n)^{(2\beta+1)/2} } \frac{ \eps_n^{(1 - 2\beta)/2} }{ (\lambda_i+\eps_n)^{(1-2\beta)/2} }\eps_n^{(2\beta+1)/2}\leq \eps_n^{(2\beta+1)/2}$.  If $\beta \geq 1/2$, then $\frac{ \eps_n\lambda_i^{(2\beta+1)/2} }{ \lambda_i+\eps_n } \leq \lambda_i^{\beta-1/2}  \eps_n$.  We have thus \eq{eq:bias}, which  completes the proof of Theorem \ref{thm:rates}

\end{proof}

For the second assertion of Theorem \ref{thm:convergence}, note
\begin{multline*}
\left\| \frac{1}{n} \sum_{i=1}^n \widehat{M}_n(X_i) - E[M(X)]\right\|_{F}
\\ \leq
\left\| \frac{1}{n} \sum_{i=1}^n \widehat{M}_n(X_i) -  \frac{1}{n} \sum_{i=1}^n M(X_i)\right\|_{F}
 + \left\| \frac{1}{n} \sum_{i=1}^n M(X_i) - E[M(X)]\right\|_{F}.
\end{multline*}
The second term in the right hand side is of $O_p(n^{-1/2})$ by the central limit theorem.  By replacing $h$ by $\frac{1}{n}\sum_{i=1}^n h_x^a$ in the proof of Theorem \ref{thm:rates}, the assertion is obtained as a corollary.

\end{document}